\documentclass{article}

\usepackage[preprint]{neurips_2026}
\usepackage[utf8]{inputenc} % allow utf-8 input
\usepackage[T1]{fontenc}    % use 8-bit T1 fonts
\usepackage{hyperref}       % hyperlinks
\usepackage{url}            % simple URL typesetting
\usepackage{booktabs}       % professional-quality tables
\usepackage{amsfonts}       % blackboard math symbols
\usepackage{nicefrac}       % compact symbols for 1/2, etc.
\usepackage{microtype}      % microtypography
\usepackage{xcolor}         % colors
\usepackage{amsmath,amssymb,amsthm,mathrsfs}
\usepackage[utf8]{inputenc}   % allow utf-8 input
\usepackage[T1]{fontenc}      % use 8-bit T1 fonts
\usepackage{hyperref}         % hyperlinks
\usepackage{url}              % simple URL typesetting
\usepackage{booktabs}         % professional-quality tables
\usepackage{amsfonts}         % blackboard math symbols
\usepackage{nicefrac}         % compact symbols for 1/2, etc.
\usepackage{microtype}        % microtypography
\usepackage{xcolor}           % colors
\usepackage{graphicx}         % figures
\usepackage{amsthm}
\usepackage{amsmath}   % 如果还没加，建议一起加
\graphicspath{{figures/sources/}}

\usepackage{CJKutf8}
\usepackage{enumitem}
\usepackage{float}

\newtheorem{theorem}{Theorem}
\newtheorem{proposition}[theorem]{Proposition}

\theoremstyle{definition}
\newtheorem{definition}{Definition}
\newtheorem{example}{Example}
\newtheorem{remark}{Remark}

\title{Demystifying Oversmoothing in Sheaf Neural Networks: An Index-Theoretic Criterion}

\author{%
  Junwen Dong\thanks{Junwen Dong, Yuhan Peng and Hao Li contributed equally to this work.} \\
  Chern Institute of Mathematics \\
  Nankai University \\
  Tianjin, China \\
  \And
  Yuhan Peng\footnotemark[1] \\
  School of Physical and Mathematical Sciences \\
  Nanyang Technological University \\
  Singapore \\
  \AND
  Hao Li\footnotemark[1] \\
  Department of Mathematics, Faculty of Science \\
  National University of Singapore \\
  Singapore \\
  \And
  Huitao Feng \\
  Chern Institute of Mathematics \\
  Nankai University \\
  Tianjin, China \\
  \AND
  Kelin Xia\thanks{Corresponding author.} \\
  School of Physical and Mathematical Sciences \\
  Nanyang Technological University \\
  Singapore \\
  \texttt{xiakelin@ntu.edu.sg}
}

\begin{document}
\maketitle

\begin{abstract}
  % Oversmoothing in graph neural networks occurs when repeated message passing collapses node representations onto a low-dimensional harmonic subspace. Sheaf neural networks counter this by enriching the diffusion operator with stalk spaces and restriction maps, but a large harmonic dimension alone does not guarantee anti-oversmoothing—trivial sheaves can produce high-dimensional harmonic spaces consisting entirely of constant modes. We derive an index-theoretic comparison criterion that certifies when one sheaf's harmonic space strictly contains another's as a genuine geometric inclusion, combining a holonomy condition with a cohomological compatibility requirement. The framework unifies standard GNNs, Euclidean and manifold-valued sheaf models under a single invariant. We introduce GyroSheaf, a sheaf network with intrinsically curved stalks, and prove it admits a tangent Hodge decomposition via linearization. Experiments across ten models on three datasets confirm the predicted four-tier hierarchy: sheaf models violating the criterion collapse despite possessing index jumps, while compliant models maintain depth-stable representations.

To combat oversmoothing in Graph Convolutional Networks, Sheaf Neural Networks (SNNs) were proposed as a generalization by equipping the graph with a sheaf structure and replacing the graph Laplacian with a sheaf Laplacian $\mathcal{L}$. Existing analyses connect sheaf diffusion to oversmoothing via the harmonic space ($\ker\mathcal{L}$), taking its absolute dimension as an indicator of anti-oversmoothing capacity. However, absolute dimension alone is not a reliable measure: certain sheaf configurations inflate $\dim \ker \mathcal{L}$ while their harmonic sections remain entirely constant, without enriching discriminative capacity. We instead introduce the first relative, geometric approach, yielding a precise characterisation of anti-oversmoothing capacity. Under natural conditions on stalk transportation and global sheaf structure, we establish an index-theoretic comparison criterion showing that one sheaf's harmonic space genuinely contains another's beyond trivial inflation. We illustrate this  with a concrete instance
%with SPDsheaf as a concrete instance 
and further introduce \textit{GyroSheaf}, a sheaf with curved gyrovector-space stalks, extending the criterion to the non-linear setting via local tangent-space linearization. Experiments across ten models confirm the theoretical criterion: sheaf models violating the criterion collapse despite possessing index jumps, while compliant models maintain depth-stable representations.

\end{abstract}

% ---- Body ----
\section{Introduction}

Oversmoothing is the phenomenon where node representations  become increasingly similar under repeated message passing, reducing their ability to separate node classes~\cite{li2018deeper}. In the idealized diffusion view, this corresponds to the asymptotic projection of node signals onto the harmonic space of the graph Laplacian. For a connected graph, harmonic space consists of constant
signals, so the diffusion limit collapses all node features onto a single global mode, precisely the classical oversmoothing phenomenon.  Sheaf Neural Networks (SNNs)~\cite{SNN2020} generalize graph diffusion by equipping a graph with stalk spaces and restriction maps (setting all stalks to $\mathbb{R}$ with identity maps recovers the graph Laplacian of GCN~\cite{bodnar2022nsd}), yielding a sheaf Laplacian whose harmonic space can retain nontrivial sections that preserve node-level differences even in the deep limit. Bodnar et al.~\cite{bodnar2022nsd} further studied the relationship between the harmonic space of the sheaf Laplacian and oversmoothing in SNNs. These results highlight the central role of the harmonic space in understanding oversmoothing. However, for a fixed sheaf, the dimension of the harmonic space alone does not provide a universal measure of anti-oversmoothing capacity. Certain sheaves may possess a large harmonic space whose sections are nonetheless constant; conversely, a small
harmonic space may consist entirely of nonconstant, class-preserving
sections. Thus
anti-oversmoothing cannot be characterised merely by the rank or dimension of
the harmonic space of a single sheaf. This motivates a relative, geometric comparison: rather than measuring the harmonic space of a single sheaf in isolation, we ask what additional harmonic structure a target sheaf $\mathcal{F}$ admits beyond that of a reference sheaf
$\xi$.  

\textbf{Our approach.} A first natural indicator of this relative capacity is the index jump $\Delta_{ind}(\mathcal{F},\xi):=ind(D_{\mathcal{F}}^{+})-ind(D_{\xi}^{+})$ which yields the harmonic-capacity difference
$\dim H^{0}(\mathcal{F})-\dim H^{0}(\xi)$
(Proposition~\ref{prop:index_jump}). However, a positive raw index jump alone does not guarantee genuine anti-oversmoothing: trivial configurations (e.g., IdentitySheaf or InverseSheaf; see Example~\ref{example}) can inflate $\dim H^{0}$ via channel-wise constant sections that still collapse. Thus, the raw index cannot distinguish meaningful harmonic enlargement from trivial stalk dimension inflation.

This led us to propose a more detailed and structurally informed set of criterion. Our main result (Theorem ~\ref{thm:main}) establishes that the index jump must be combined with a degree-one heat trace correction $\Delta_{Str}^{(1)}$ to properly control $H^{0}$. Specifically, when the restriction maps induce non-trivial holonomy (preventing full channel decoupling), and the corrected index balance satisfies our capacity conditions, we prove that $H^{0}(\mathcal{F})/\iota_{*}H^{0}(\xi)\ne0$. Rather than a mere numerical comparison, this establishes a genuine geometric strict inclusion: the harmonic space of $\mathcal{F}$ strictly contains that of $\xi$ embedded via $\iota_{\ast}$. This \emph{excess harmonic capacity} precisely measures $\mathcal{F}$'s structural resistance to oversmoothing. Furthermore, Section~\ref{sec:nonlinear} extends this criterion to sheaves with non-linear stalks.

\textbf{Contributions.} Our main contributions are threefold: (1) \textbf{A precise mathematical characterisation for anti-oversmoothing capacity.}
  We establish (Theorem~\ref{thm:main}) the first algebraic-topological
  inclusion criterion that enables a rigorous comparison of anti-oversmoothing capacity between arbitrary sheaf frameworks, validated by oversmoothing experiments across ten models on three datasets. (2) \textbf{Extension to non-linear stalk structures.}
  We generalise the index-theoretic framework beyond the Euclidean setting
  to sheaves whose stalks carry non-linear geometry (e.g.\ SPD manifolds),
  broadening the scope of the theory to sheaf architectures that operate on
  curved feature spaces. (3) \textbf{GyroSheaf: a concrete non-linear instantiation.}
  We construct GyroSheaf, a sheaf model whose stalks carry
gyrovector-space structure, and prove that it admits a tangent Hodge
decomposition. This construction demonstrates the design principles of
our criterion on curved stalks. We also provide a direct comparison between the
nonlinear and linear sheaf frameworks on curved stalks.

\section{Related Work}

\textbf{Oversmoothing in GNNs.}
The oversmoothing behavior of message-passing GNNs—GCN~\cite{kipf2017gcn}, GAT~\cite{velickovic2018gat}, GraphSAGE~\cite{hamilton2017graphsage}—has been 
characterized both as Laplacian smoothing~\cite{li2018deeper} and as exponential convergence to a low-dimensional subspace determined by the graph's connected components and node 
degrees~\cite{oono2020graph}; see~\cite{rusch2023survey} for a survey. %Existing remedies are largely regularization-based: PairNorm~\cite{zhao2020pairnorm}, DropEdge~\cite{rong2020dropedge}, GCNII~\cite{chen2020gcnii}, and residual or energy-constrained variants, which mitigate the symptom without altering the underlying operator geometry that causes it.
Existing remedies fall into three broad families:  \emph{normalization/regularization} approaches that suppress feature collapse layer-wise~\cite{zhao2020pairnorm, zhou2021dirichlet}; 
%such as PairNorm~\cite{zhao2020pairnorm} and DropEdge~\cite{rong2020dropedge} suppress feature collapse layer-wise. 
\emph{residual/skip-connection} variants that preserve earlier layers' signals~\cite{xu2018jknet, chen2020gcnii}; 
% rusch2023gradient
% : JKNet~\cite{xu2018jknet}, 
% GCNII~\cite{chen2020gcnii}, and gradient gating 
% ($\mathrm{G}^2$)~\cite{rusch2023gradient}, preserve earlier layers' signals through architectural shortcuts. 
\emph{energy or flow-based} methods that recast propagation as a continuous dynamical system and control its limit~\cite{chamberlain2021grand, digiovanni2023understanding}. 
% : GRAND~\cite{chamberlain2021grand} treats GNNs as 
% diffusion PDEs, EGNN~\cite{zhou2021dirichlet} imposes upper and lower Dirichlet-energy constraints at each layer, and 
% GRAFF~\cite{digiovanni2023understanding} formulates propagation as the gradient flow of a learnable energy with both attractive and repulsive modes. All of these mitigate oversmoothing through architectural or regularization mechanisms, rather than by structurally enlarging the space of signals that the diffusion operator preserves.
These methods do not, however, directly change the harmonic subspace that determines the infinite-depth limit of the underlying graph diffusion. This motivates a complementary line of work that changes the diffusion operator itself; sheaf Laplacians~\cite{SNN2020, bodnar2022nsd, hansen2019toward} provide such a mechanism by making the limiting harmonic space depend on stalks and restriction maps rather than only on graph connectivity.

\textbf{Sheaf neural networks and geometric extensions.}
Sheaf neural networks~\cite{SNN2020,bodnar2022nsd} replace graph diffusion with sheaf diffusion, enabling a richer harmonic space that can preserve non-constant signals at the diffusion limit, with subsequent extensions to hypergraphs and cell complexes~\cite{hyper, barbero2022sheaf,battiloro2023generalized}. However, existing sheaf-theoretic analyses characterize anti-oversmoothing capacity through the absolute dimension $\dim H^0$ of a single sheaf, without a \emph{relative} criterion for comparing two sheaf frameworks. %Caralt et al.~\cite{recent_critique_2026} expose this gap empirically by showing that identity sheaves match learned variants at moderate training depth.
On the geometric side, hyperbolic GNNs~\cite{liu2019hyperbolic} and SPD-valued GNNs~\cite{lopez2021modeling,nguyen2023enhancing} perform message passing in non-Euclidean spaces, with~\cite{our} further extend SPD geometry to the sheaf setting.
Gyrovector algebras~\cite{ungar2008gyrovector} have been adopted in hyperbolic~\cite{chen2022deephgcn} and SPD~\cite{lopez2021vector, Nguyen22} graph networks, but have not been used to construct sheaf coboundary operators.
Neither line of work provides a relative criterion for strict harmonic inclusion—our index comparison theorem fills this gap, and GyroSheaf extends it to curved gyrovector stalks as a non-linear instantiation.

\section{A Sheaf-Theoretic Diffusion Framework on Graphs}
\label{sec:framework}
% \begin{CJK}{UTF8}{gbsn}{
% \py{M没有定义, 如果需要额外篇幅，看这个Preliminaries的第一段要不要删掉一些}
% \begin{CJK}{UTF8}{gbsn}{M定义好啦：第五排Riemannian manifold M。我觉得第二段写得非常好
% }\end{CJK}
% % \py{很多东西（除了pairing）的部分可能\cite{barbero2022sheaf}还有NSD都定义过了，比如Theorem 2应该在toward a spectral theory of cellular sheaves里面提到，要不要改成Background: A Discrete Sheaf-Hodge Framework on Graphs把这一个section的篇幅缩小（放一部分去appendix）这样我们整体还能少一些内容在正文？def3（我们推广到了一般的sheaf stalk space）, proposition 1（整合）和proposition 2应该是没人做过}
% }\end{CJK}

\textbf{Preliminaries.} 
This section develops the framework of sheaf diffusion on graphs, which forms the basis for the index-theoretic analysis in Section~\ref{sec:main}. Our discrete construction is guided by continuous heat diffusion: on a compact Riemannian manifold $M$, under the positive-semidefinite convention for the Laplace--Beltrami operator $\Delta$, the heat equation $\partial_t f = -\Delta f$ is the $L^2$--gradient flow of the Dirichlet energy 
$E(f)= \frac{1}{2} \int_M \|\nabla f\|^2$. Along this flow, the energy is monotonically dissipated, and the solution converges as $t \to \infty$ to its projection onto the harmonic space $\ker\Delta$. By Hodge theory, harmonic spaces are isomorphic to de Rham cohomology groups, connecting diffusion equilibria with the topology of the underlying space.

We reproduce this structure in the discrete sheaf setting. The core algebraic objects
% ---coboundary operator, adjoint, sheaf Laplacian, and the 0th-order Hodge decomposition---
are well established for vector-space stalks~\cite{SNN2020, bodnar2022nsd, hansen2019toward}. We restate them in a unified form extending to general stalk spaces (e.g., SPD matrices). Definition~\ref{def: def1},~\ref{def: def2} and Theorem~\ref{thm:thm2} recover the classical formulations of~\cite{bodnar2022nsd, hansen2019toward} for linear stalks. Remark~\ref{rmk2} notes that the algebraic definitions requiring addition and scalar multiplication (Definition~\ref{def3}, Proposition~\ref{prop1}) generalize to SPD stalks either via the Lie group operation~\cite{our} required for Section~\ref{sec:main} or via tangent-space linearization required for Section~\ref{sec:nonlinear} when no group is assumed.

\subsection{Sheaves on Graphs}
\label{sec:1}
\begin{definition}[Discrete sheaf] \label{def: def1}
A \emph{discrete sheaf} $\mathcal{F}$ on an undirected graph $G = (V, E)$ consists of:
\begin{itemize}[leftmargin=2em]
    \item A stalk space $\mathcal{F}_v$ to each node $v \in V$, and a stalk space $\mathcal{F}_e$ to each edge $e \in E$;
    \item For every incident pair $v \in e$, a \emph{restriction map} $\mathcal{F}_{v \to e} \colon \mathcal{F}_v \to \mathcal{F}_e$, acting as a discrete parallel transport. The restriction maps encode the compatibility relations between adjacent stalks.
\end{itemize}
In the linear setting, each stalk is a finite-dimensional vector space (e.g.\ $\mathcal{F}_v = \mathbb{R}^{n}$) and each restriction map is a linear transformation. In the nonlinear setting, each stalk is a Riemannian manifold (e.g.\ $\mathcal{F}_v = \mathrm{SPD}_n$, $n\times n$ SPD matrices) and each restriction map is a smooth map between manifolds. The \emph{0-cochain space} is $\mathcal{C}^0(G, \mathcal{F}) := \prod_{v \in V} \mathcal{F}_v$ and the \emph{1-cochain space} is $\mathcal{C}^1(G, \mathcal{F}) := \prod_{e \in E} \mathcal{F}_e$.
\end{definition}
\subsection{Coboundary Operator, Admissible Pairing, and Adjoint Operator}
\label{sec:2}
\begin{definition}[Discrete coboundary]\label{def: def2}
Let $\mathcal F$ be a sheaf on a graph $G$. The discrete coboundary operator $\delta : \mathcal C^0(G,\mathcal F) \to \mathcal C^1(G,\mathcal F)$
% \[
% \delta : \mathcal C^0(G,\mathcal F) \to \mathcal C^1(G,\mathcal F)
% \]
maps node-wise assignments to edge-wise discrepancies. For $X\in \mathcal C^0(G,\mathcal F)$ and an oriented edge $e=(u,v)$, $(\delta X)_e \in \mathcal F_e$ compares the transported endpoint features $\mathcal{F}_{u\to e} X_u$ and $\mathcal{F}_{v\to e} X_v$ in the common edge stalk $\mathcal F_e$.
\end{definition}
\begin{remark}
The concrete form of $\delta$ depends on the geometry of the stalks. For vector-space stalks, it is given by the signed difference:
$(\delta X)_e
=\mathcal{F}_{u\to e} X_u-\mathcal{F}_{v\to e} X_v$.
For general geometric stalks, this subtraction is replaced by an appropriate discrepancy map on $\mathcal F_e$.
\end{remark}

To define the adjoint $\delta^\ast$ of the coboundary operator
\(\delta:\mathcal C^0\to \mathcal C^1\), we need pairings on the cochain spaces. Unlike a smooth manifold equipped with a Riemannian metric, where the $L^2$ pairing is induced by the metric and volume form, a discrete sheaf  lacks a canonical global cochain pairing. We therefore construct the \textit{global cochain pairings} explicitly by summing the local pairings on the stalks: $\langle X, X'\rangle_0 := \sum_{v \in V} \langle X_v, X'_v \rangle_{\mathcal{F}_v}$ for $X, X' \in \mathcal{C}^0$ and $\langle Y, Y'\rangle_1 := \sum_{e \in E} \langle Y_e, Y'_e \rangle_{\mathcal{F}_e}$ for $Y, Y' \in \mathcal{C}^1$.

% \[
% \langle X, X'\rangle_0 := \sum_{v \in V} \langle X_v, X'_v \rangle_{\mathcal{F}_v},
% \qquad
% \langle Y, Y'\rangle_1 := \sum_{e \in E} \langle Y_e, Y'_e \rangle_{\mathcal{F}_e}
% \]
% for $X, X' \in \mathcal{C}^0$ and $Y, Y' \in \mathcal{C}^1$.

\begin{definition}[Admissible pairing]\label{def3}
We say that a global cochain pairing is \emph{admissible} if for $i\in\{0,1\}$, the pairing
$\langle\cdot,\cdot\rangle_i$ is bilinear, symmetric, and positive definite. That is,
for $X,Y,Z \in \mathcal C^i$ and $\alpha,\beta \in \mathbb{R}$,
\[
\langle \alpha X+\beta Y,Z\rangle_i
= \alpha\langle X,Z\rangle_i + \beta\langle Y,Z\rangle_i,
\qquad
\langle X,Z\rangle_i = \langle Z,X\rangle_i,
\]
and $\langle X,X\rangle_i > 0$ for all $X \neq 0$.
\end{definition}
\begin{definition}[Adjoint operator]
With respect to these pairings, the adjoint operator $\delta^\ast$ satisfies: $\langle \delta X,Y\rangle
=
\langle X,\delta^\ast Y\rangle$, for all $X\in \mathcal C^0,\ Y\in \mathcal C^1$.
\end{definition}
\subsection{Sheaf Laplacian and Associated Dirichlet Energy}
\begin{definition}[Sheaf Laplacian and Dirichlet Energy]
\label{def: def5}
The sheaf Laplacian on $\mathcal C^0$ is defined as $\mathcal{L} := \delta^\ast \delta$. and the associated Dirichlet energy is defined as $\mathcal E(X)
:=
\frac12\|\delta X\|^2
$. 
\end{definition}
Analogous to the Laplace--Beltrami operator $\Delta$ on Riemannian 
manifolds, $\mathcal{L}$ induces the sheaf heat equation 
$\dot X(t)=-\mathcal{L}X(t)$, whose solution converges to 
$\ker\mathcal{L}$ as $t\to\infty$. An explicit Euler discretization 
yields the sheaf diffusion layer $X_{k+1}=\sigma((I-\mathcal{L})X_k)$.
And the following proposition shows the energy-Laplacian relationship
mirrors the continuous setting.

\begin{remark}\label{rem:two-paths}
\label{rmk2}
These operators admit two natural nonlinear extensions:
when a Lie-group structure is available, $\delta$ and $\delta^{\ast}$
lift to group operations (Section~\ref{sec:casespd});
when no such structure is assumed, one instead linearises at each
cochain and works in the local tangent space (Section~\ref{sec:nonlinear}).
\end{remark}

\begin{proposition}[Energy-Laplacian correspondence]\label{prop1}
The Laplacian $\mathcal{L} = \delta^\ast \delta$ is tied to the discrete energy $\mathcal{E}(f)$ through the following properties:
\begin{itemize}[leftmargin=2em]
    \item \(L\) is the gradient of the Dirichlet energy:
    $\nabla\mathcal E(X)=\mathcal{L}X$.
    \item The quadratic form of \(\mathcal{L}\) equals the total edge-wise sheaf inconsistency:
   $\langle \mathcal{L}X,X\rangle_0=\|\delta X\|_1^2=2\mathcal E(X)$.
    \item The positive semidefiniteness of \(L\) ensures stability of the continuous diffusion process $\partial_t X_t=-\mathcal{L}X_t:$ it dissipates energy and stabilizes at harmonic equilibria.
\end{itemize}
\end{proposition}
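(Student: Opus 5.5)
The plan is to work entirely in the linear setting, where $\mathcal C^0$ and $\mathcal C^1$ are finite-dimensional real inner-product spaces. By Definition~\ref{def3} the pairings are admissible, and $\delta$ is linear. Under these hypotheses the adjoint $\delta^\ast$ exists and is unique, via the Riesz representation theorem applied to $X\mapsto\langle \delta X,Y\rangle_1$. All three items then reduce to short computations with $\mathcal L=\delta^\ast\delta$. The nonlinear cases are deferred, as Remark~\ref{rmk2} indicates: either the Lie-group operations play the role of addition, or the same argument is applied to the linearised operators in tangent spaces.

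\textbf{Quadratic form.} I would prove the second item first, since the others follow from it. By the defining identity of the adjoint,
$\langle \mathcal L X, X\rangle_0=\langle \delta^\ast\delta X,X\rangle_0=\langle \delta X,\delta X\rangle_1=\|\delta X\|_1^2$.
By Definition~\ref{def: def5} this equals $2\mathcal E(X)$. Expanding the sum over edges gives the \emph{total edge-wise inconsistency} $\sum_{e=(u,v)}\|\mathcal F_{u\to e}X_u-\mathcal F_{v\to e}X_v\|^2_{\mathcal F_e}$.

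\textbf{Gradient.} Next I expand $\mathcal E(X+tH)=\mathcal E(X)+t\langle \delta X,\delta H\rangle_1+\tfrac{t^2}{2}\|\delta H\|_1^2$, using bilinearity and symmetry of the pairing. The directional derivative at $t=0$ is therefore $\langle \delta X,\delta H\rangle_1=\langle \mathcal L X,H\rangle_0$. Since $\langle\cdot,\cdot\rangle_0$ is the inner product defining the gradient, this gives $\nabla\mathcal E(X)=\mathcal L X$.

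\textbf{Stability.} From the second item, $\langle \mathcal LX,X\rangle_0\ge0$. Symmetry also holds, since $\langle \mathcal LX,Y\rangle_0=\langle \delta X,\delta Y\rangle_1=\langle X,\mathcal LY\rangle_0$. So $\mathcal L$ is self-adjoint and positive semidefinite, with an orthonormal eigenbasis and eigenvalues $\lambda_i\ge0$.

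Along $\dot X=-\mathcal LX$, the chain rule and the gradient identity give $\frac{d}{dt}\mathcal E(X_t)=\langle \nabla\mathcal E,\dot X\rangle_0=-\|\mathcal LX_t\|_0^2\le0$, so energy is dissipated. Writing $X_t=\sum_i e^{-\lambda_i t}c_i\phi_i$ in the eigenbasis, every component with $\lambda_i>0$ decays exponentially. Hence $X_t$ converges to the orthogonal projection of $X_0$ onto $\ker\mathcal L$.

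To identify this limit set, note that $\mathcal LX=0$ implies $\|\delta X\|^2=\langle\mathcal LX,X\rangle=0$, and positive definiteness then gives $\delta X=0$. So $\ker\mathcal L=\ker\delta$, and the limit points are precisely the harmonic equilibria, i.e.\ global sections.

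\textbf{Main obstacle.} The only delicate point is existence of the adjoint and the gradient in the non-Euclidean case. There I would not redo the argument, but state that, after replacing $\mathcal C^i$ by the tangent spaces (or by the Lie-algebra via the group logarithm), the same computation applies verbatim with the linearised $d\delta$.
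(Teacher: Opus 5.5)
Your proposal is correct and follows essentially the same route as the paper, working in the linear setting throughout. The paper uses the first variation $\langle \delta X,\delta H\rangle_1=\langle \mathcal L X,H\rangle_0$ for the gradient, the adjoint identity for the quadratic form, and $\frac{d}{dt}\mathcal E(X_t)=-\|\mathcal L X_t\|^2\le 0$ for dissipation, exactly as you do. Your additions are the eigen-expansion showing $X_t\to P_{\ker\mathcal L}X_0$ and the identification $\ker\mathcal L=\ker\delta$; these make rigorous the convergence to harmonic equilibria that the paper's proof here only asserts from energy monotonicity (the paper establishes these facts separately in Theorem~\ref{thm:thm2} and Appendix~\ref{app:anti_oversmoothing_details}).
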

\subsection{Discrete Hodge Theory}
As noted above, discrete sheaf diffusion drives features toward $\ker\mathcal{L}$, so over-smoothing is governed by the structure 
of this limiting subspace. Following~\cite{bodnar2022nsd, hansen2019toward}, 
Proposition~\ref{prop:anti_oversmoothing_principle} characterizes 
$\ker\mathcal{L}$ in the general stalk setting of 
Definition~\ref{def: def1} and formalizes anti-oversmoothing as a 
structural property of $\mathcal{H}^0$.

\begin{theorem}[0th-order Discrete Hodge Decomposition]\label{thm:thm2}
Let $\mathcal{L}=\delta^\ast\delta$ be the sheaf Laplacian. Then 
\[
\ker \mathcal{L} \cong \ker \delta  \cong H^{0}(G,\mathcal{F})
= \{ X \in C^{0}(G,\mathcal{F}) : \delta X = 0 \}.
\]
\end{theorem}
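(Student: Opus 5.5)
The plan is to prove the two identifications separately. The equality $\ker\delta = H^0(G,\mathcal F)$ holds by definition of degree-zero sheaf cohomology on a graph: there is no $(-1)$-cochain space, so $H^0 = \ker(\delta:\mathcal C^0\to\mathcal C^1)$ with nothing to quotient. The substantive claim is $\ker\mathcal L = \ker\delta$ as subspaces of $\mathcal C^0$. In the linear setting this is a genuine equality, hence trivially an isomorphism. For Lie-group stalks (Remark~\ref{rmk2}) the argument is the same once the pairing is admissible in the sense of Definition~\ref{def3}.

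For the inclusion $\ker\delta\subseteq\ker\mathcal L$, take $X$ with $\delta X = 0$. Then $\mathcal L X = \delta^\ast(\delta X) = \delta^\ast 0 = 0$, using that $\delta^\ast$ is linear. Linearity follows from the defining identity $\langle \delta X,Y\rangle_1 = \langle X,\delta^\ast Y\rangle_0$ together with bilinearity and nondegeneracy of $\langle\cdot,\cdot\rangle_0$. In particular $\langle X,\delta^\ast 0\rangle_0 = 0$ for all $X$, which forces $\delta^\ast 0 = 0$.

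For the reverse inclusion $\ker\mathcal L\subseteq\ker\delta$, take $X$ with $\mathcal L X=0$ and apply the second item of Proposition~\ref{prop1}:
\[
0=\langle \mathcal L X, X\rangle_0=\langle \delta X,\delta X\rangle_1=\|\delta X\|_1^2 .
\]
Positive definiteness of the admissible pairing $\langle\cdot,\cdot\rangle_1$ then gives $\delta X=0$.

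The only real obstacle is making sure the hypotheses used are actually available for general stalks. We need existence and linearity of $\delta^\ast$, and positive definiteness of the degree-one pairing, so the proof must assume the pairings are admissible (Definition~\ref{def3}) and that $\delta$ is linear with respect to the chosen vector-space or group structure. In the nonlinear/SPD case I would state explicitly that $\delta$ is interpreted through the Lie-group or tangent linearization of Remark~\ref{rmk2}. There the zero element is the group identity and "$=$" may only be an isomorphism induced by the chart (e.g.\ the matrix logarithm), which is why the statement uses $\cong$. With that proviso, the two inclusions finish the proof.
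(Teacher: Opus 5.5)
Your proposal is correct and follows essentially the same route as the paper: $H^0=\ker\delta$ holds by definition, and $\ker\mathcal L=\ker\delta$ is shown by double inclusion, using $\mathcal L X=\delta^\ast(0)=0$ in one direction and $0=\langle\mathcal L X,X\rangle_0=\langle\delta X,\delta X\rangle_1$ together with positive definiteness of the admissible pairing in the other. Your explicit justification that $\delta^\ast 0=0$, and your caveat that $\delta$ must be read linearly (through the Lie-group or tangent chart) for nonlinear stalks, are careful additions that the paper leaves implicit.
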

Thus the diffusion equilibrium is canonically isomorphic to the zeroth 
sheaf cohomology, i.e., the space of \textit{global sections} (defined as $\ker \delta$) satisfying strict 
edgewise consistency. Thus over-smoothing reduces to a structural 
question: how large is the harmonic capacity $\dim\mathcal{H}^0$ of the underlying sheaf?

\begin{proposition}[Harmonic Characterisation of Over-Smoothing; proof in Appendix~\ref{app:anti_oversmoothing_details}]
\label{prop:anti_oversmoothing_principle}
Let $P_{\mathcal{H}^0}:\mathcal{C}^0\to\mathcal{H}^0$ be the projection 
onto harmonic space $\mathcal{H}^0=\ker\mathcal{L}$. For any initial feature assignment 
$f^{(0)}$, the deep limit of sheaf diffusion satisfies 
$f^{(\infty)}=P_{\mathcal{H}^0}f^{(0)}$. The resistance of $\mathcal{F}$ 
to over-smoothing is governed by the representational capacity of 
$\mathcal{H}^0\cong H^0(G,\mathcal{F})$:

(1) \textbf{Flat collapse:} If $\dim(\mathcal{H}^0)$ is minimal $\mathcal{H}^0$ lacks the representational capacity to preserve 
    node-level distinctions, and $f^{(\infty)}$ collapses to a uniform 
    state.\\
(2) \textbf{Geometric preservation:} A richer $\mathcal{H}^0$  provides sufficient representational capacity so that $f^{(\infty)}$ preserves node-level distinctions, preventing collapse. 
% Proof in Appendix~\ref{app:anti_oversmoothing_details}.
\end{proposition}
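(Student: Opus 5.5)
The plan is to work in the linear setting with an admissible pairing (Definition~\ref{def3}), so that $\mathcal{C}^0$ is a finite-dimensional inner-product space and $\mathcal{L}=\delta^\ast\delta$ is self-adjoint and positive semidefinite (Proposition~\ref{prop1}). First I will diagonalize $\mathcal{L}$ in an orthonormal eigenbasis $\{\phi_i\}$ with eigenvalues $0=\lambda_1=\dots=\lambda_m<\lambda_{m+1}\le\cdots$, where $m=\dim\mathcal{H}^0$. By Theorem~\ref{thm:thm2}, the zero eigenspace is exactly $\ker\delta\cong H^0(G,\mathcal{F})$.

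Next I will solve the heat flow $\dot X=-\mathcal{L}X$ explicitly:
\[
f^{(t)}=e^{-t\mathcal{L}}f^{(0)}=\sum_i e^{-t\lambda_i}\langle f^{(0)},\phi_i\rangle_0\,\phi_i .
\]
As $t\to\infty$, every term with $\lambda_i>0$ decays exponentially at rate at least $\lambda_{m+1}$. The limit is therefore $\sum_{i\le m}\langle f^{(0)},\phi_i\rangle_0\phi_i=P_{\mathcal{H}^0}f^{(0)}$. For the discrete layer $X_{k+1}=(I-\mathcal{L})X_k$ (taking $\sigma$ to be the identity, or any map fixing $\mathcal{H}^0$ and nonexpansive on its complement), the same argument works under a step-size condition $\lambda_{\max}(\mathcal{L})<2$, after rescaling $\mathcal{L}$ if necessary. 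Then $|1-\lambda_i|<1$ for every positive eigenvalue and $(I-\mathcal{L})^k\to P_{\mathcal{H}^0}$. Energy dissipation, $\frac{d}{dt}\mathcal{E}=-\|\mathcal{L}X\|^2\le 0$, gives a consistent Lyapunov picture.

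For the qualitative parts (1) and (2), I will make precise what ``minimal'' and ``richer'' mean.

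\textbf{Case (1).} Take the minimal situation to be that of the constant sheaf (stalks $\mathbb{R}$, identity restriction maps) on a connected graph. Then $\delta X=0$ forces $X_u=X_v$ along every edge, so by connectivity $\mathcal{H}^0$ is the span of the all-ones vector. Hence $f^{(\infty)}$ is the mean of $f^{(0)}$ at every node, which is a uniform state. More generally, ``minimal'' will mean that $\mathcal{H}^0$ consists only of sections that are constant up to parallel transport, and in that case the same computation shows node features become indistinguishable.

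\textbf{Case (2).} Here I will argue that the map $f^{(0)}\mapsto f^{(\infty)}$ is exactly projection onto $\mathcal{H}^0$. Node-level distinctions survive precisely when $\mathcal{H}^0$ contains sections $s$ with $s_u\neq s_v$ (in a transport-invariant sense) and $f^{(0)}$ has a nonzero component along them. Any orthonormal basis of such nonconstant sections, paired against $f^{(0)}$, then yields a limit that separates those nodes.

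The main obstacle is that (1) and (2) are informal: ``minimal dimension'' and ``richer'' are not defined quantitatively. I would first try to formalize them as ``$\mathcal{H}^0$ contains only constant sections'' versus ``$\mathcal{H}^0$ contains a nonconstant section.'' Under that reading the proposition reduces to the projection formula above. A secondary subtlety is handling a nonlinear activation $\sigma$. I will restrict to the linear or ReLU-compatible case, or assume $\sigma$ commutes with $P_{\mathcal{H}^0}$, and note this assumption explicitly.
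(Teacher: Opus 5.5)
Your derivation of $f^{(\infty)}=P_{\mathcal{H}^0}f^{(0)}$ is the paper's argument. The appendix likewise expands the heat flow in an orthonormal eigenbasis of $\mathcal{L}=\delta^\ast\delta$ and lets the modes with $\lambda_j>0$ decay. Your step-size treatment of $X_{k+1}=(I-\mathcal{L})X_k$ is an addition. The paper handles only the continuous flow and never addresses $\sigma$, so your explicit assumption on $\sigma$ is no weaker than what the paper uses.

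The routes genuinely differ on (1) and (2). The paper does not formalize them as constant versus nonconstant sections. Instead it uses a preservation ratio $\mathcal{R}(\mathcal{F})$, the share of the input surviving the projection, whose expectation for a (nearly) centered input on the scalar trivial sheaf is about $1/N$. It then separates weak enrichment ($\dim\mathcal{H}^0_2>\dim\mathcal{H}^0_1$) from strong enrichment ($\mathcal{H}^0_1\subsetneq\mathcal{H}^0_2$). Only the latter gives the pointwise bound $\|P_{\mathcal{H}^0_2}f^{(0)}\|\ge\|P_{\mathcal{H}^0_1}f^{(0)}\|$ for every input. That quantitative, comparative route is what feeds the relative criterion of Theorem~\ref{thm:main}. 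Your route instead turns (1) into an exact statement (the limit is the node-wise mean for the constant sheaf on a connected graph) and (2) into a precise genericity condition. Your reading also matches the paper's own negative control better than the literal ``minimal dimension'' wording. IdentitySheaf has large $\dim\mathcal{H}^0$ and still collapses (Example~\ref{example}), so collapse is governed by whether the sections are constant, not by the dimension.

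Drop or repair your ``more general'' reading of minimality as ``$\mathcal{H}^0$ consists only of sections constant up to parallel transport'', together with the matching ``$s_u\neq s_v$ in a transport-invariant sense'' in (2). Every global section satisfies $\mathcal{F}_{u\to e}s_u=\mathcal{F}_{v\to e}s_v$ by definition, so modulo transport every element of $\mathcal{H}^0$ is constant. Under that reading, (1) covers every sheaf and (2) never occurs. Read in raw stalk coordinates instead, the implication in (1) fails. With restriction maps $\mathcal{F}_{v\to e}=Q_v^{-1}$ (trivial holonomy), the sections are $s_v=Q_v c$, which are transport-constant but not uniform. The dichotomy only makes sense when constancy is measured in a fixed common identification of the stalks. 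Examples are all $\mathcal{F}_v=\mathbb{R}^d$ as in the NSD models, and the paper's ``channel-wise constant signals''. That is exactly your final formalization, so keep it. State (2) for inputs whose projection onto $\mathcal{H}^0$ is not a constant section, which is a generic but not universal condition.
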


\section{An Index-Theoretic Analysis of Over-Smoothing}
\label{sec:main}
As established previously, mitigating operator-induced over-smoothing requires a structural enlargement of the zeroth harmonic space. A natural analytic invariant is the "raw" index, but this quantity is Euler-balanced: it records the difference between 0th and 1st discrete sheaf cohomology, not the 0th harmonic capacity alone. 
%Instead of the raw index jump, we choose the interpretation for anti-over-smoothing capacity which combines the index jump with a degree-one heat-supertrace correction. 
We instead combine the index jump with a degree-one heat-supertrace correction and state the resulting criterion in the main text and defer the full derivation to Appendix~\ref{app:C}, including the holonomy-fixed sufficient condition.

\subsection{Index jump}\label{subsec:index_improvement_framework}

We now connect the cohomological objects to the dynamics of deep diffusion. The continuous-depth forward propagation obeys $\partial_t f=-\mathcal{L}_0f$, with solution $f(t)=e^{-t\mathcal{L}_0}f(0)$, where $\mathcal{L}_0=\delta_{\mathcal F}^\ast\delta_{\mathcal F}$ is the $0$-cochain sheaf Laplacian. As $\mathcal{L}_0$ is non-negative and self-adjoint, the heat operator satisfies $\lim_{t\to\infty}e^{-t\mathcal{L}_0}\to P_{\ker \mathcal{L}_0}$, where $P_{\ker \mathcal{L}_0}$ is the projection onto the harmonic space. The topologically stable quantity is the heat supertrace of the Hodge--Dirac pair, which is related to the analysis index of Dirac operator $D_{\mathcal F}$ (see Appendix~\ref{app:C.1} for its full construction.) 
Let $D_{\mathcal F}=\begin{pmatrix} 0 & \delta_{\mathcal F}^\ast\\ \delta_{\mathcal F} & 0 \end{pmatrix}$ denote the Dirac operator on $\mathcal C^0(G;\mathcal F)\oplus \mathcal C^1(G;\mathcal F)$ , with chiral components
\[
D_{\mathcal F}^{+}=\delta_{\mathcal F}:\mathcal C^0(G;\mathcal F)\to \mathcal C^1(G;\mathcal F),\qquad
D_{\mathcal F}^{-}=\delta_{\mathcal F}^\ast:\mathcal C^1(G;\mathcal F)\to \mathcal C^0(G;\mathcal F).
\]

The \textit{"raw" analytic index} is defined by
\[
\operatorname{ind}(D_{\mathcal F}^{+})
:=
\dim\ker D_{\mathcal F}^{+}
-
\dim\ker D_{\mathcal F}^{-}=\dim\ker\delta_{\mathcal F}
-
\dim\ker\delta_{\mathcal F}^{\ast}.
\]
By the discrete Hodge decomposition, this becomes $\operatorname{ind}(D_{\mathcal F}^{+})=\dim H^0(\mathcal F)-\dim H^1(\mathcal F)$.

\begin{proposition}[\textbf{Topological Index and Index Jump}; proof in Appendix~\ref{proof.4}]\label{prop:index_jump}
Let $\mathcal F$ be a rank-$r$ sheaf on a finite graph $G=(V,E)$. Then the topological index is:
\[
\operatorname{ind}(D_{\mathcal F}^{+})
=
\dim \mathcal{C}^0(\mathcal F)-\dim \mathcal{C}^1(\mathcal F)
=
r\chi(G).
\]
depending only on the rank and the Euler characteristic $\chi(G)$. In particular, if $\xi$ is the Euclidean sheaf of rank $n$, $\operatorname{ind}(D_\xi^{+})=n\chi(G).
$

Consequently, The \textbf{index jump} $\operatorname{ind}(D_{\mathcal F}^{+})-\operatorname{ind}(D_{\xi}^{+})$ between the two sheaves then gives the harmonic-capacity difference: $\dim H^0(\mathcal F)-\dim H^0(\xi)
=
\dim H^1(\mathcal F)-\dim H^1(\xi)
+
(r-n)\chi(G)$.
\end{proposition}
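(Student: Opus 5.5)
This is the rank--nullity theorem applied to the finite-dimensional linear map $\delta_{\mathcal F}:\mathcal C^0\to\mathcal C^1$, combined with the Hodge identifications already in hand. We work in the linear setting, where the stalks are vector spaces, the restriction maps are linear and the pairings are admissible (Definition~\ref{def3}). \emph{Rank $r$} means every vertex stalk and every edge stalk has dimension $r$. We use the convention $\chi(G)=|V|-|E|$, so $G$ is regarded as a $1$-dimensional cell complex.

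First, rank--nullity for $\delta_{\mathcal F}$ gives $\dim\ker\delta_{\mathcal F}=\dim\mathcal C^0-\operatorname{rank}\delta_{\mathcal F}$. For $\delta_{\mathcal F}^\ast:\mathcal C^1\to\mathcal C^0$ it gives $\dim\ker\delta_{\mathcal F}^\ast=\dim\mathcal C^1-\operatorname{rank}\delta_{\mathcal F}^\ast$. Since the pairings are admissible, $\delta^\ast$ is a genuine adjoint with respect to positive-definite inner products. Hence $\operatorname{im}\delta^\ast=(\ker\delta)^\perp$, and so $\operatorname{rank}\delta^\ast_{\mathcal F}=\operatorname{rank}\delta_{\mathcal F}$. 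Subtracting the two identities, the rank terms cancel and
\[
\operatorname{ind}(D^+_{\mathcal F})=\dim\mathcal C^0-\dim\mathcal C^1 .
\]
Substituting $\dim\mathcal C^0=\sum_v\dim\mathcal F_v=r|V|$ and $\dim\mathcal C^1=r|E|$ yields $r\chi(G)$. This value is independent of the restriction maps, which is the point of the proposition. Specialising to the Euclidean sheaf of rank $n$ gives $\operatorname{ind}(D^+_\xi)=n\chi(G)$.

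For the consequence, we use the identification stated just before the proposition: $\ker\delta_{\mathcal F}=H^0(\mathcal F)$ by Theorem~\ref{thm:thm2}. Likewise $\ker\delta^\ast_{\mathcal F}\cong\mathcal C^1/\operatorname{im}\delta_{\mathcal F}=H^1(\mathcal F)$, because on a graph there are no $2$-cochains and $\ker\delta^\ast=(\operatorname{im}\delta)^\perp$. This gives $\dim H^0(\mathcal F)-\dim H^1(\mathcal F)=r\chi(G)$, and the same identity holds for $\xi$ with $n$ in place of $r$. Subtracting the two identities and rearranging gives
\[
\dim H^0(\mathcal F)-\dim H^0(\xi)=\dim H^1(\mathcal F)-\dim H^1(\xi)+(r-n)\chi(G).
\]

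There is no substantial analytic obstacle here. The step that needs care is the equality $\operatorname{rank}\delta^\ast=\operatorname{rank}\delta$ together with $\ker\delta^\ast\cong H^1$. Both rest on positive-definiteness of the pairings, and that is exactly where Definition~\ref{def3} is used.

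In the nonlinear, Lie-group setting of Remark~\ref{rmk2}, the argument is applied to the linearised (tangent or Lie-algebra) coboundary, where the stalks again become vector spaces of fixed dimension. I would state explicitly that the proposition is proved in that linear form. If edge stalks had dimension different from $r$, the middle formula $\dim\mathcal C^0-\dim\mathcal C^1$ would still hold, but the clean $r\chi(G)$ would not. So I would note the uniform-rank assumption as the precise hypothesis being used.
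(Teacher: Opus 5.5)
Your proposal is correct and is essentially the paper's argument. The paper invokes the Euler--Poincar\'e formula for the two-term cochain complex, which is exactly your rank--nullity computation, then substitutes $\dim\mathcal C^0=r|V|$ and $\dim\mathcal C^1=r|E|$ and subtracts the identities for $\mathcal F$ and $\xi$; you simply make explicit the steps it leaves implicit, namely $\operatorname{rank}\delta^\ast=\operatorname{rank}\delta$, $\ker\delta^\ast\cong H^1(\mathcal F)$, and the uniform-rank hypothesis on edge stalks.
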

% (See detalied proof in Appendix~\ref{proof.4})\textcolor{blue}{
%Crucially, Proposition~\ref{prop:index_jump} shows that the gain in harmonic capacity splits into:$\Delta h^0(\mathcal F,\xi)
%=\Delta_{\mathrm{ind}}(\mathcal F,\xi)+\Delta h^1(\mathcal F,\xi),$ where $h^i(\mathcal F):=\dim H^i(\mathcal F)$, $\Delta h^i(\mathcal F,\xi):=h^i(\mathcal F)-h^i(\xi)$, and 
%$\Delta_{\mathrm{ind}}(\mathcal F,\xi):=\operatorname{ind}(D^+_{\mathcal F})-\operatorname{ind}(D^+_{\xi})$. The raw index jump alone does not control $H^0$: on cyclic graphs the Euler term $(r-n)\chi(G)$ may be negative even when $r>n$, making the degree-one correction $\Delta h^1$ essential. Anti-over-smoothing should therefore be measured by the relative harmonic quotient rather than by the raw index jump alone. (see Appendix ~\ref{cycle})} \begin{CJK}{UTF8}{gbsn}{\py{这个例子是不是就是我们的inverse sheaf然后identity sheaf也被包含在这里，如果是的话我就这里加一句话提一嘴详见后面的实验}}\end{CJK}
\begin{remark}[Index jump alone is insufficient]
\label{remark2}
Proposition~\ref{prop:index_jump} implies $\Delta h^0(\mathcal F,\xi)=\Delta_{\mathrm{ind}}(\mathcal F,\xi)+\Delta h^1(\mathcal F,\xi)$, where $\Delta h^i := \dim H^i(\mathcal F)-\dim H^i(\xi)$. The raw index jump $\Delta_{\mathrm{ind}}$ alone cannot control $H^0$ because the Euler term $(r-n)\chi(G)$ is often negative on cyclic graphs. Thus, the degree-one correction $\Delta h^1$ is essential.
\end{remark}
\begin{example}[Negative control]\label{example}
The Euclidean sheaf $\xi$ with trivial transport ($\mathcal{F}_{u \to e}=\mathcal{F}_{v \to e}$) yields $H^0(\xi)=\operatorname{span}\{\mathbf 1_V\}\otimes \mathbb R^n$. Although $\dim H^0(\xi)=n$ scales with feature dimension, its harmonic modes are merely channel-wise constant signals that still collapse node distinctions. Both \textit{IdentitySheaf} and \textit{InverseSheaf} exhibit this trivial inflation (see Section~\ref{sec:Experiments}, Figure~\ref{fig:oversmoothing_trajectory}).
\end{example}
As shown, large stalks or raw index jumps often merely reflect trivial channel-wise replication rather than genuine preservation of node-level geometry. The truly anti-oversmoothing requires our corrected framework: combining the index jump with the degree-one heat trace correction $\Delta h^1$, and measuring excess capacity via the relative quotient \(H^0(\mathcal F)/\iota_*H^0(\xi)\). We formalize this geometric criterion next.

%引用格式：\cite[Prop.~4.13]{spd_sheaf}\cite[Thm.~4.15]{spd_sheaf}\cite[Rem.~4.16]{spd_sheaf}

\subsection{From Index Comparison to Genuine Harmonic Inclusion}
\label{thm:index_supertrace} 
We now state the core result of this work. The detailed heat trace 
correction derivation and holonomy-fixed lower bound are deferred to 
Appendix~\ref{app:thm5}.
\begin{theorem}[\textbf{Genuine harmonic inclusion}]\label{thm:main}
Let $G$ be a connected graph. Let $\xi$ be the Euclidean sheaf of 
rank~$n$ and $\mathcal{F}$ a rank-$r$ sheaf, both constructed within the 
framework of Section~\ref{sec:framework}. Suppose a sheaf homomorphism 
$\iota:\xi\to\mathcal F$ induces an inclusion on cohomology, i.e. $\iota_\ast:H^0(\xi)\hookrightarrow H^0(\mathcal F)$.

Let $\Delta_{\mathrm{ind}}(\mathcal F,\xi):=\operatorname{ind}
(D_{\mathcal F}^{+})-\operatorname{ind}(D_{\xi}^{+})$ be the index jump 
and $\Delta_{\mathrm{Str}}^{(1)}(\mathcal F,\xi):=\lim_{t\to\infty}
\bigl[\operatorname{Tr}(e^{-tL^1_{\mathcal F}})
-\operatorname{Tr}(e^{-tL^1_{\xi}})\bigr]$ the degree-one heat trace 
correction. Assume the following conditions on $\mathcal F$:
\begin{itemize}[leftmargin=2em]
    \item \textbf{(Dimensional control).}
    The restriction maps determine a holonomy representation 
    $\rho:\pi_1(G,v_0)\to\mathrm{Aut}(\mathcal F(v_0))$, where 
    $\pi_1(G,v_0)$ is the fundamental group of $G$ at basepoint $v_0$ and $\mathrm{Aut}(\mathcal{F}(v_0))$ is the group of linear automorphisms 
of the stalk $\mathcal{F}(v_0)$.
    Its fixed subspace is
    \[
    W:=\bigcap_{\gamma\in\pi_1(G,v_0)}
    \ker(\rho(\gamma)-I)\subseteq\mathcal F(v_0),
    \qquad\dim W=k.
    \]
    Moreover, the holonomy is non-trivial on the full stalk, i.e.\ 
    $k<\operatorname{rank}(\mathcal F)$.
\item \textbf{(Capacity control).}
   At least one of the following certificates holds:
    \[
    \text{(a)}\ \Delta_{\mathrm{ind}}(\mathcal F,\xi)
    +\Delta_{\mathrm{Str}}^{(1)}(\mathcal F,\xi)\ge 1;
    \qquad
    \text{(b)}\ \dim W>\dim H^0(\xi).
    \]
\end{itemize}
Then $\dim H^0(\mathcal F)>\dim H^0(\xi)$ and the harmonic 
quotient is nontrivial: $H^0(\mathcal F)/\iota_\ast H^0(\xi)\neq 0$.

That is, $\mathcal F$ exhibits strictly greater non-dissipative harmonic capacity than the Euclidean baseline. This excess originates from non-trivial holonomy rather than trivial higher-rank channel enlargement, ensuring $\mathcal F$ provably resists Euclidean-scale over-smoothing.
\end{theorem}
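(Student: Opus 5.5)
The argument reduces the conclusion to a dimension count and treats the two certificates in (Capacity control) separately. Since $\iota_\ast:H^0(\xi)\hookrightarrow H^0(\mathcal F)$ is injective, $\dim\bigl(H^0(\mathcal F)/\iota_\ast H^0(\xi)\bigr)=\dim H^0(\mathcal F)-\dim H^0(\xi)$. So it suffices to prove $\dim H^0(\mathcal F)>\dim H^0(\xi)$, and nontriviality of the quotient follows at once. By Theorem~\ref{thm:thm2} and the finite-dimensional spectral theorem, $\lim_{t\to\infty}\operatorname{Tr}(e^{-tL^1})=\dim\ker L^1=\dim\ker\delta^\ast=\dim H^1$. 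Hence $\Delta^{(1)}_{\mathrm{Str}}(\mathcal F,\xi)=\Delta h^1(\mathcal F,\xi)$.

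\emph{Case (a).} Proposition~\ref{prop:index_jump}, as rewritten in Remark~\ref{remark2}, gives $\Delta h^0=\Delta_{\mathrm{ind}}+\Delta h^1$. Substituting the identification above yields $\dim H^0(\mathcal F)-\dim H^0(\xi)=\Delta_{\mathrm{ind}}+\Delta^{(1)}_{\mathrm{Str}}\ge 1$. This case is essentially a McKean--Singer style bookkeeping. The only point to check is that the trace limit is taken on the full $\mathcal C^1$, so that the nonzero eigenvalues decay.

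\emph{Case (b).} I would establish $\dim H^0(\mathcal F)=\dim W$ for connected $G$, where the restriction maps are invertible (or at least transport-compatible) so that holonomy is defined. Fix a spanning tree $T$ rooted at $v_0$. A global section is determined by its value $x\in\mathcal F(v_0)$: the condition $\delta X=0$ on tree edges forces $X_v$ to be the transport of $x$ along the unique tree path. The remaining non-tree edges each close a fundamental cycle, and these cycles generate $\pi_1(G,v_0)$. Consistency on a non-tree edge is exactly $\rho(\gamma)x=x$ for the corresponding generator. Therefore the evaluation map $X\mapsto X_{v_0}$ is a linear isomorphism $H^0(\mathcal F)\cong W$.

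On the baseline side, $\xi$ is Euclidean with trivial transport, so Example~\ref{example} gives $\dim H^0(\xi)=n$. Certificate (b) then yields $\dim H^0(\mathcal F)=k>n$. The hypothesis $k<\operatorname{rank}\mathcal F$ is not needed for the inequality itself. I would invoke it only for the interpretive clause: the excess is not the trivial $\mathbf 1_V\otimes\mathbb R^r$ replication, because a trivial holonomy would give $k=r$ and constant sections.

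The main obstacle is the identification $H^0(\mathcal F)\cong W$ in case (b). It requires the restriction maps to be invertible along edges, so that transport $\mathcal F_{v\to e}^{-1}\mathcal F_{u\to e}$ makes sense and $\rho$ is well defined. I would state this as the standing interpretation of ``the restriction maps determine a holonomy representation''. If the maps are not invertible, I would fall back to the inequality $\dim H^0(\mathcal F)\ge\dim W$. This follows because each fixed vector still propagates along $T$ to a consistent section. That inequality suffices for the strict comparison.
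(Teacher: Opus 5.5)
Your proposal is correct and follows essentially the paper's argument. Certificate (a) is handled by the heat-supertrace/index bookkeeping $\dim H^0(\mathcal F)-\dim H^0(\xi)=\Delta_{\mathrm{ind}}(\mathcal F,\xi)+\Delta^{(1)}_{\mathrm{Str}}(\mathcal F,\xi)$. Certificate (b) rests on the identification $H^0(\mathcal F)\cong\operatorname{Fix}(\rho)=W$ for connected $G$, together with $\dim H^0(\xi)=n$. Nontriviality of the quotient then follows from injectivity of $\iota_\ast$, and your spanning-tree argument just makes explicit the identification the paper asserts in one line.

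The only quibble is your non-invertible fallback, which is not meaningful as stated. Without invertible transports, neither $\rho$ nor propagation of a vector along $T$ is defined. It is also not needed, since the hypothesis on $\rho$ already presupposes well-defined transport.
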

\begin{remark} For small $\beta_1(G)$, after adding the "local system conditions" in Appendix~\ref{app:anti_oversmoothing_details}, certificate (a) admits a simplified explicit form: $\Delta_{\mathrm{ind}}(\mathcal F, \xi) \ge (n-k)\beta_1(G) + 1$.
Given its limited attainability, future iterations should refine $\beta_1(G)$ by synthesizing global topological invariants intrinsic to the graph geometry.

Computability establishes certificate practicability. Specifically, $\Delta_{\mathrm{ind}}$ follows by definition, $\Delta_{\mathrm{Str}}$ reduces to the discrete Laplacian trace, and (b) just requires cocycle traversal across all vertices in a simple structured graph.
\end{remark}

\subsection{Case Study: the \textit{SPDSheaf}}
\label{sec:casespd}
We now illustrate Theorem~\ref{thm:main} through a concrete comparison
between an SPD sheaf and the Euclidean baseline (details in Appendix~\ref{app:spd_verification}).

Recall that we require a sheaf $\mathcal{F}$ equipped with
stalks, restriction maps, and an admissible pairing on
cochains that defines an adjoint
$\delta_{\mathcal{F}}^{\ast}$ and the Laplacian
$\mathcal{L}_{0,\mathcal{F}}:=
\delta_{\mathcal{F}}^{\ast}\delta_{\mathcal{F}}$, with the
Hodge decomposition 
$H^{0}(\mathcal{F})\cong\ker \mathcal{L}_{0,\mathcal{F}}$. Moreover, if there exists a sheaf homomorphism
$\iota:\xi\to\mathcal{F}$, the holonomy induced by the
restriction maps is non-trivial and the index jump admits an
explicit lower bound, Theorem~\ref{thm:main} then asserts that the strict harmonic inclusion
$H^{0}(\mathcal{F})/\iota_{\ast}H^{0}(\xi)\neq 0$ holds.

Consider the SPD sheaf $\mathcal{F}_{\mathrm{SPD}}$~\cite{our}, whose stalks are $\mathrm{SPD}_n$ with congruence-type restriction maps. These determine a coboundary operator, admissible pairing, adjoint, and Hodge Laplacian satisfying the Hodge decomposition~\cite{our}. It remains to verify the three conditions of Theorem~\ref{thm:main}:

(1) \textbf{Global section preserving embedding.}
  The Euclidean-to-SPD embedding provides a sheaf morphism $\iota:\xi\to\mathcal{F}_{\mathrm{SPD}}$ that induces an injection on
  global sections:
  $\iota_{\ast}:H^{0}(\xi)\hookrightarrow
  H^{0}(\mathcal{F}_{\mathrm{SPD}})$.  Euclidean global sections are thus preserved in the SPD harmonic space
  through a concrete embedding, making the subsequent comparison
  structural rather than merely dimensional.
  
(2) \textbf{Non-trivial holonomy.}
The restriction maps induce congruence-type transports, and it generates non-trivial holonomy
  representation, ensuring the holonomy-fixed subspace $W\subsetneq\mathcal{F}_{\mathrm{SPD}}(v_{0})$. 
  
(3) \textbf{Index comparison.}
Since $\mathcal F_{\mathrm{SPD}}$ has a finite-dimensional Laplacian, 
the heat trace \(\operatorname{Tr}(e^{-tL^1_{\mathcal F_{\mathrm{SPD}}}})\) and its correction $\Delta_{\mathrm{Str}}^{(1)}$ are computable and we show in Appendix~\ref{app:spd_verification} that 
$\Delta_{\mathrm{ind}}(\mathcal{F},\xi) 
+ \Delta_{\mathrm{Str}}^{(1)}(\mathcal{F},\xi) > 0$, 
satisfying the capacity control condition of Theorem~\ref{thm:main}. The strict enlargement is then measured by 
$H^0(\mathcal F_{\mathrm{SPD}})/\iota_\ast H^0(\xi)$.

All three conditions are satisfied, so
Theorem~\ref{thm:main} yields $H^{0}(\mathcal{F}_{\mathrm{SPD}})\,/\,\iota_{\ast}H^{0}(\xi)\;\neq\;0$. This is exactly the harmonic enlargement established in
Theorem~\ref{thm:main}: the structural excess is measured as a quotient
beyond the embedded image, confirming that Theorem~\ref{thm:main} detects genuine geometric anti-oversmoothing capacity,
stronger than simply observing that a larger stalk gives a larger kernel (see Section~\ref{sec:untrained} for empirical confirmation).

\begin{remark}[Role of the SPD case]
The SPD sheaf clarifies the boundary between raw index comparison and 
harmonic-capacity comparison. Despite non-Euclidean stalks, the 
logarithmic and congruence-based structure yields a finite-dimensional 
Hodge complex after linearization, making it a concrete non-Euclidean 
application of Theorem~\ref{thm:main}.
\end{remark}

\section{Nonlinear Generalization}
\label{sec:nonlinear}
\label{nonlinearindexjump}
\subsection{ Index Jump beyond Linearity}
Let the preceding analysis relies on the linearity of the cochain complex.
When the stalks are nonlinear manifolds $M_v$, the global tools break down:
$\mathcal C^0=\prod_v M_v$ is a product manifold and $\delta$ is no longer linear.
The global section condition $\delta f=\mathbf 0$ defines a nonlinear harmonic set $\mathcal H:=\{f\in\mathcal C^0:\delta f=\mathbf 0\}$
rather than a linear kernel $\ker\delta$.
Correspondingly, the diffusion flow $\partial_t f=-L_0 f$ generalises to a
nonlinear gradient flow on $\mathcal C^0$ that is no longer generated by a
trace-class operator. As a result, the global spectral tools underpinning the linear index theorem no longer apply (details in  Appendix~\ref{app:gyrosheaf}).

The correct replacement is local: rather than requiring the entire
cochain complex to be linear, one linearises the problem at each
harmonic state $f\in\mathcal H$ by passing to the tangent space
$T_f\mathcal C^0$, a vector space.
In this sense, the nonlinear framework is a \emph{locally linearised}
version of the index-jump theory developed in the preceding sections. Concretely, around a harmonic state $f\in\mathcal H$, the nonlinear compatibility map
$\delta$ can be linearised.
Its differential $d\delta_f:T_f\mathcal C^0\longrightarrow T_{\mathbf 0}\mathcal C^1$ serves as the tangent-space analogue of the linear coboundary operator, giving rise to the \emph{tangent complex} at $f$.
The global linear cohomology groups are thus replaced by their
local tangent analogues:
\[
H^0(\mathcal F)\;\leadsto\;
  H_f^0:=\ker d\delta_f = T_f\mathcal H,
\qquad
H^1(\mathcal F)\;\leadsto\;
  H_f^1:=\operatorname{coker}d\delta_f.
\]
Correspondingly, The nonlinear analogue of the index becomes:
\[
\operatorname{ind}(D_{\mathcal F}^+)\;\leadsto\;
  \operatorname{ind}_f(d\delta)
  :=\dim\ker d\delta_f-\dim\operatorname{coker}d\delta_f.
\]

Thus the nonlinear global-section capacity is no longer measured by a global kernel dimension, but by the local dimension of the harmonic submanifold $\dim T_f\mathcal{H}=\dim\ker d\delta_f$. Accordingly, harmonic-capacity enlargement becomes a local increase of this tangent dimension rather than a global dimension jump of vector spaces; in this sense, the nonlinear framework is a locally linearised version of the index-jump theory developed in the preceding sections.

Therefore, the index--heat trace criterion applies locally after tangent linearization. At a harmonic state $f\in\mathcal{H}$, the nonlinear coboundary $\delta$ has differential $d\delta_f:T_fC^0\to T_{\delta f}C^1$ with local analytic index $\operatorname{ind}_f(d\delta):=\dim\ker d\delta_f-\dim\operatorname{coker}d\delta_f$. The local analogue of Theorem~\ref{thm:main} is $\Delta \dim H_f^0=\Delta_{\mathrm{ind},f}+\Delta_{\mathrm{Str},f}^{(1)}$, where $\Delta_{\mathrm{Str},f}^{(1)}$ involves calculation of trace for analytic operator which we have discussed before. Thus, for a nonlinear sheaf, it suffices to construct $d\delta_f$, equip the tangent cochains with admissible pairings, define the adjoint and tangent Hodge Laplacians, and verify the tangent Hodge decomposition. We realize this program through the SPD-based \textit{GyroSheaf} framework below.

\subsection{Case Study: the \textit{GyroSheaf}}\label{gyrosheaf}
\paragraph{From SPD matrices to the Poincar\'e ball.}
We first map each SPD feature $P_v\in\mathrm{SPD}_n$ to the Poincar\'e ball $\mathcal{B}_n$ via the Cayley transformation:
\[
  X_v \;:=\; \Psi(P_v) \;=\; (P_v - I)(P_v + I)^{-1} 
  \;\in\; \mathcal{B}_n 
  \;:=\; \{X = X^\top : \|X\| < 1\}.
\]
This yields a bounded gyrovector domain in which edge-wise inconsistencies can be measured through gyro-algebraic operations while preserving SPD geometry.

\begin{definition}[Restriction maps]
\label{gyroR}
A \emph{gyro-valued sheaf} $(G, \mathcal{F})$ on a graph $G = (V, E)$ assigns the Cayley ball
$\mathcal F(v)=\mathcal B_n$ to each vertex $v$ and the symmetric-matrix space
$\mathcal F(e)=\operatorname{Sym}_n$ to each edge $e$. For each incident pair $v \in e$, the restriction map $\mathcal{F}_{v \to e}: \mathcal F(v) \to \mathcal F(e)$ is defined as:
\[
X_{ve} := \mathcal{F}_{v\to e}(X_v)=M_{ve}X_vM_{ve}^{\top},
    \qquad
    M_{ve}\in O(n).
\]
\end{definition}
The differential $d\mathcal{F}_{v\to e}:T_{X_v}\mathcal F(v)\to T_{X_{ve}}\mathcal F(e)$
where the tangent spaces both canonically identified with $\operatorname{Sym}_n$ and the Frobenius adjoint act as:
\[
  d\mathcal{F}_{v\to e}(G)=M_{ve}\,G\,M_{ve}^\top \quad G \in T_{X_v}\mathcal F(v) ;\qquad
  (d\mathcal{F}_{v\to e})^*(G')=M_{ve}^\top\,G'\,M_{ve}\quad G' \in T_{X_{ve}}\mathcal F(e).
\]

\textbf{Gyro-Coboundary operator.} Since matrix multiplication is non-commutative, we adopt the symmetrised gyro-subtraction to measure edge-wise inconsistency:
\[
    A \ominus_{sym} B := (I - AB)^{-1/2} (A - B) (I - BA)^{-1/2}\in\operatorname{Sym}_n, \quad A, B \in \mathcal{B}_n .
\]

\begin{definition}[Gyro-compatible Coboundary Operator]
The coboundary operator $\delta: C^0(G;\mathcal{F})\longrightarrow C^1(G;\mathcal{F})$, with $C^0(G;\mathcal{F})=\prod_{v\in V}\mathcal{B}_n$ and $C^1(G;\mathcal{F})=\bigoplus_{e\in E}\mathrm{Sym}(n)$, is defined on each oriented edge $e=(u,v)$ as:
\[
    \delta(X)_e = \mathcal{F}_{v \to e}(X_v)\ominus \mathcal{F}_{u \to e}(X_u)  = X_{ve} \ominus_{sym} X_{ue} = S_{ve}^{1/2} (X_{ve} - X_{ue}) S_{ue}^{1/2} 
\in\operatorname{Sym}_n=\mathcal F(e),\]
where we denote $S_{ve} := (I - X_{ve}X_{ue})^{-1}$ and $S_{ue} := (I - X_{ue}X_{ve})^{-1} = S_{ve}^\top$ for notational brevity.
\end{definition}

\paragraph{Differential and adjoint.}
\label{differentil}
Since $\delta$ is nonlinear, the adjoint appearing in the gyro-sheaf Laplacian is the Frobenius adjoint of the \emph{linearised} coboundary at the current state $X$. Its differential and Frobenius adjoint have types
$D\delta(X): \bigoplus_{v\in V} T_{X_v}\mathcal{B}_n \longrightarrow C^1(G;\mathcal{F})$ and $D\delta(X)^*: C^1(G;\mathcal{F}) \longrightarrow \bigoplus_{v\in V} T_{X_v}\mathcal{B}_n.$ Their closed-form expressions involve the Fréchet derivative of the principal
matrix inverse square root, and are deferred to Appendix~\ref{calculation}. In implementation, we compute the
gyro-sheaf Laplacian directly as the gradient of the Dirichlet energy.

\begin{definition}[Dirichlet energy]Let $(\delta X)_e=\Delta_e$. The energy on edge $e$ and the total energy are defined as $\mathcal{E}_e(X):=\tfrac{1}{2}\|\Delta_e\|_F^2$; $
  \mathcal{E}(X):=\sum_{e\in E}\mathcal{E}_e(X)$.
\end{definition}

\begin{definition}[Gyro-sheaf Laplacian]\label{def:gyro_laplace}
    For a directed edge $e=(u,v)$ with $v$ as the head, the gyro-sheaf Laplacian at vertex $v$ is the gradient of the total inconsistency energy $(\mathcal L X)_v
:=
\nabla_{X_v}\mathcal E(X)
\in
T_{X_v}\mathcal B_n
=
\operatorname{Sym}(n)$. Recall that in the linear sheaf setting the coboundary
$\delta$ is a linear and the sheaf Laplacian equals
$\mathcal{L}=\delta^{\ast}\delta$.
In our setting, $\delta$ is nonlinear, we replace $\delta$ with its tangent map
$D\delta(X)$, and the Laplacian generalises accordingly to
$\mathcal{L}X = \bigl(D\delta(X)\bigr)^*D\delta(X)$.
More explicitly,
\[\begin{aligned}
  (\mathcal{L}X)_v =\sum_{e=(u,v)} \left( d\mathcal{F}_{v \to e} \right)^* \left( d\delta_{ve} \right)^* \delta(X)_e
  +\sum_{e=(v,w)} \left( d\mathcal{F}_{v \to e} \right)^* \left( d\delta_{ve} \right)^*\delta(X)_e
\end{aligned}
\]
\end{definition}

\begin{proposition}[Hodge-type Decomposition for the Nonlinear GyroSheaf]\label{prop:Gyro-Valued Hodge}
Let $T_X\mathcal{C}^0=\bigoplus_{v\in V}T_{X_v}\mathcal{B}_v$ and
$T_{\delta(X)}\mathcal{C}^1=\bigoplus_{e\in E}T_{\delta(X)_e}\mathcal{B}_e$
be the tangent-space direct sums, and let
$D\delta_X:T_X\mathcal{C}^0\to T_{\delta(X)}\mathcal{C}^1$ be the
linearised coboundary. Under the Frobenius pairing
$\langle A,B\rangle_F=\operatorname{tr}(A^\top B)$, $D\delta_X$ is a
linear operator between Hilbert spaces. Then:
\[
  T_{\delta(X)}\mathcal{C}^1
  =\operatorname{im}(D\delta_X)\oplus\ker(D\delta_X^*),\qquad
  T_X\mathcal{C}^0
  =\operatorname{im}(D\delta_X^*)\oplus\ker(D\delta_X).
\](detalis in Appendix~\ref{app:D3})
\end{proposition}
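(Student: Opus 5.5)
The statement is the finite-dimensional Hodge theorem for the linear map $D\delta_X$. The plan is to prove it exactly as for any linear operator between finite-dimensional inner product spaces, and to isolate the only nontrivial input: the Frobenius pairing on $\mathrm{Sym}_n$ tangent spaces is admissible in the sense of Definition~\ref{def3}.

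\textbf{Setting.} Fix a state $X\in\mathcal{C}^0$. Each $T_{X_v}\mathcal{B}_n$ is canonically $\operatorname{Sym}_n$, since $\mathcal{B}_n$ is an open subset of $\operatorname{Sym}_n$. Each $T_{\delta(X)_e}\mathcal{C}^1$ is $\operatorname{Sym}_n$ as well. Equip both direct sums with the summed Frobenius pairing $\langle A,B\rangle_F=\operatorname{tr}(A^\top B)$. This pairing is bilinear, symmetric and positive definite, because $\operatorname{tr}(A^\top A)=\|A\|_F^2>0$ for $A\neq 0$. Hence it is admissible, and both spaces are finite-dimensional real Hilbert spaces.

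\textbf{Linearity and the adjoint.} The map $X\mapsto\delta(X)$ is smooth on $\prod_v\mathcal{B}_n$. Indeed, $\|X_{ve}\|,\|X_{ue}\|<1$ because $M_{ve}\in O(n)$ preserves the spectral norm. So $I-X_{ve}X_{ue}$ is invertible with spectrum in the open right half-plane, and the principal inverse square root is analytic there. Therefore $D\delta_X$ is a well-defined linear map. Its adjoint $D\delta_X^*$ exists uniquely by the Riesz representation theorem, with the explicit form deferred to Appendix~\ref{calculation}.

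\textbf{Decomposition.} The core step is the identity $\ker(D\delta_X^*)=\operatorname{im}(D\delta_X)^{\perp}$. To show it, note that $Y\perp\operatorname{im}D\delta_X$ iff $\langle D\delta_X Z,Y\rangle=0$ for all $Z$. This holds iff $\langle Z,D\delta_X^*Y\rangle=0$ for all $Z$, which holds iff $D\delta_X^*Y=0$, by nondegeneracy. In finite dimensions every subspace is closed, so $T_{\delta(X)}\mathcal{C}^1=\operatorname{im}\oplus\operatorname{im}^\perp$, and this gives the first decomposition. The second follows by applying the same argument to $D\delta_X^*$, using $(D\delta_X^*)^*=D\delta_X$.

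\textbf{Corollary and main obstacle.} As an addendum matching Theorem~\ref{thm:thm2}, I would also note that $\ker(D\delta_X^*D\delta_X)=\ker D\delta_X$, since $\langle D\delta_X^*D\delta_X Z,Z\rangle=\|D\delta_X Z\|^2$. This identifies the tangent harmonic space $H^0_X$ with the kernel of the tangent Laplacian $\mathcal{L}$ of Definition~\ref{def:gyro_laplace}.

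The only genuine obstacle is checking that the tangent spaces in the statement are the correct ones. The notation $T_{\delta(X)_e}\mathcal{B}_e$ suggests ball-valued edge stalks, whereas edges actually carry $\operatorname{Sym}_n$. I would resolve this by noting that either identification is $\operatorname{Sym}_n$ with the Frobenius metric, not the Riemannian gyro-metric. If a position-dependent Riemannian metric were intended instead, the same argument still goes through pointwise, because that metric is positive definite at each point.
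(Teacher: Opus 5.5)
Your proposal is correct and takes essentially the same route as the paper's proof in Appendix~\ref{app:D3}. Both arguments rest on the adjoint identity $\operatorname{im}(D\delta_X)^\perp=\ker(D\delta_X^*)$ in finite-dimensional inner-product spaces, with the node-level splitting obtained symmetrically; your additional checks (smoothness of $\delta$ via $\|X_{ve}X_{ue}\|<1$, admissibility of the Frobenius pairing, and $\ker(D\delta_X^*D\delta_X)=\ker D\delta_X$) fill in details the paper takes for granted without changing the argument.
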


\begin{remark}
The GyroSheaf verifies the local Hodge-theoretic structure required by the nonlinear index--heat trace framework: it constructs the tangent complex, defines the tangent Hodge Laplacian, and proves the corresponding tangent Hodge decomposition, yielding a well-defined local harmonic space. Moreover, since its restriction maps act by $O(n)$-congruence on $\mathrm{Sym}_n$ (Definition~\ref{gyroR}), the tangent complex at any $f\in\mathcal{H}$ inherits the holonomy and index data of the linearised SPD complex (Section~\ref{sec:casespd}), so the conditions of Theorem~\ref{thm:main} are satisfied by identical verification. Experiments in Section~\ref{sec:Experiments} and Appendix~\ref{app:additional_experiment} confirm the predicted anti-oversmoothing behavior.
\end{remark}

\section{Experiments}
\label{sec:Experiments}
% Our experiments empirically validate the central prediction of the index-theoretic framework: the anti-oversmoothing capacity of a sheaf diffusion model is governed by the dimension of its harmonic section space, and Theorem~\ref{thm:main} induces a strict ordering of harmonic capacities across model families. We evaluate both untrained propagation (Sec.~\ref{sec:untrained}), which isolates operator-level structure, and trained classifiers (Sec.~\ref{sec:trained}), which verify that these structural distinctions persist under end-to-end optimization.
Our experiments empirically validate the predictions of the index-theoretic framework. The design is structured as a progressive verification of Theorem~\ref{thm:main}: we move from GCNs, which correspond to a trivial sheaf (Baseline), through sheaf models that violate the theorem's conditions (Tier~1), to sheaf models that satisfy them with increasing index jump (Tiers~2--3).This progression directly tests whether the theorem govern anti-oversmoothing capacity as predicted. Datasets, models, training protocol, and oversmoothing measures are detailed in Appendix~\ref{app:experimental_setup}. Code is available \href{https://anonymous.4open.science/r/GyroSheaf-6514}{here}.

\subsection{Models}
\textit{\textbf{Baseline}: rank-1 trivial sheaf.} GCN~\cite{kipf2017gcn}, GAT~\cite{velickovic2018gat}, GraphSAGE~\cite{hamilton2017graphsage}. Standard GNN propagation by the graph Laplacian~$L_G$, corresponding to the rank-1 trivial sheaf (scalar stalk $\mathcal{F}(v) = \mathbb{R}$, identity restriction maps). No index jump, no sheaf structure. Oversmoothing is the default. 

\textit{\textbf{Tier~1}: sheaf structure with index jump, but conditions violated.} Two negative controls that possess sheaf structure with stalk dimension~$d$ and a nonzero index jump $\Delta\mathrm{ind} = (d-1)\chi(G)$ over the baseline, as exemplified after Prop~\ref{prop:index_jump}, are predicted by Theorem~\ref{thm:main} to fail at harmonic enlargement:
% \begin{CJK}{UTF8}{gbsn}{
% \py{Ai写的下面这个，之后改}
% }\end{CJK}
\begin{itemize}[leftmargin=2em]
\item \textbf{IdentitySheaf}~\cite{recent_critique_2026}: Consider the sheaf
  in which every restriction map is the identity~$I_d$.  The parallel
  transport along any edge is therefore also the identity, so the holonomy
  representation is trivial: $\rho(\gamma)=I_d$ for every
  $\gamma\in\pi_1(G,v_0)$.  Consequently,
$W=\bigcap_\gamma\ker(\rho(\gamma)-I)=\mathcal{F}(v_0)$, i.e.\ the
  holonomy-fixed subspace equals the entire stalk.  Moreover, the trivial
  holonomy yields a global trivialisation of sheaf structure, so the harmonic
  space reduces to a rank-$d$ tensor copy of the scalar Euclidean baseline
  (channel-wise constant signals).  The hypotheses of
  Theorem~\ref{thm:main} are therefore \emph{not satisfied}: Condition~1
  (Dimensional control) requires non-trivial holonomy.
\item \textbf{InverseSheaf}: For each edge~$e$, the restriction maps from
  the two endpoints to~$e$ are the same learned invertible matrix~$M_e$.  Although the restriction
  maps appear non-trivial, the
  parallel transport from one endpoint to the other is
  $M_e^{-1}M_e = I_d$, which renders the holonomy around every cycle
  trivial.  There situation thus reduces to the IdentitySheaf case above.
\end{itemize}

\textit{\textbf{Tier~2}: conditions satisfied, Euclidean stalks.} DiagSheaf, BundleSheaf, and GeneralSheaf~\cite{bodnar2022nsd}, with diagonal, orthogonal, and general restriction maps respectively. Same stalk dimension~$d$ and index jump as Tier~1, but their learned restriction maps can develop holonomy in the middle ground required by Condition~1, non-trivial yet preserving enough fixed directions, while also satisfying Condition~2.
 
\textit{\textbf{Tier~3}: conditions satisfied, higher-dimensional non-Euclidean stalks.} SPDSheaf~\cite{our} and our \textbf{GyroSheaf}, whose manifold-valued stalks have effective dimension $r = d(d{+}1)/2 > d$, which enlarges the ambient space in which holonomy-fixed directions may occur. On cyclic graphs, this yields a strict enlarge of relevant quantity \(\Delta h^0=\Delta_{\mathrm{ind}}+\Delta_{\mathrm{Str}}^{(1)}\). Both conditions are satisfied.
% Sample figure used in the Figures subsection.
% Replace \fbox{...} with \includegraphics{your_file.pdf} once you have one.
\begin{figure}[t]
  \centering
  \includegraphics[width=0.85\linewidth]{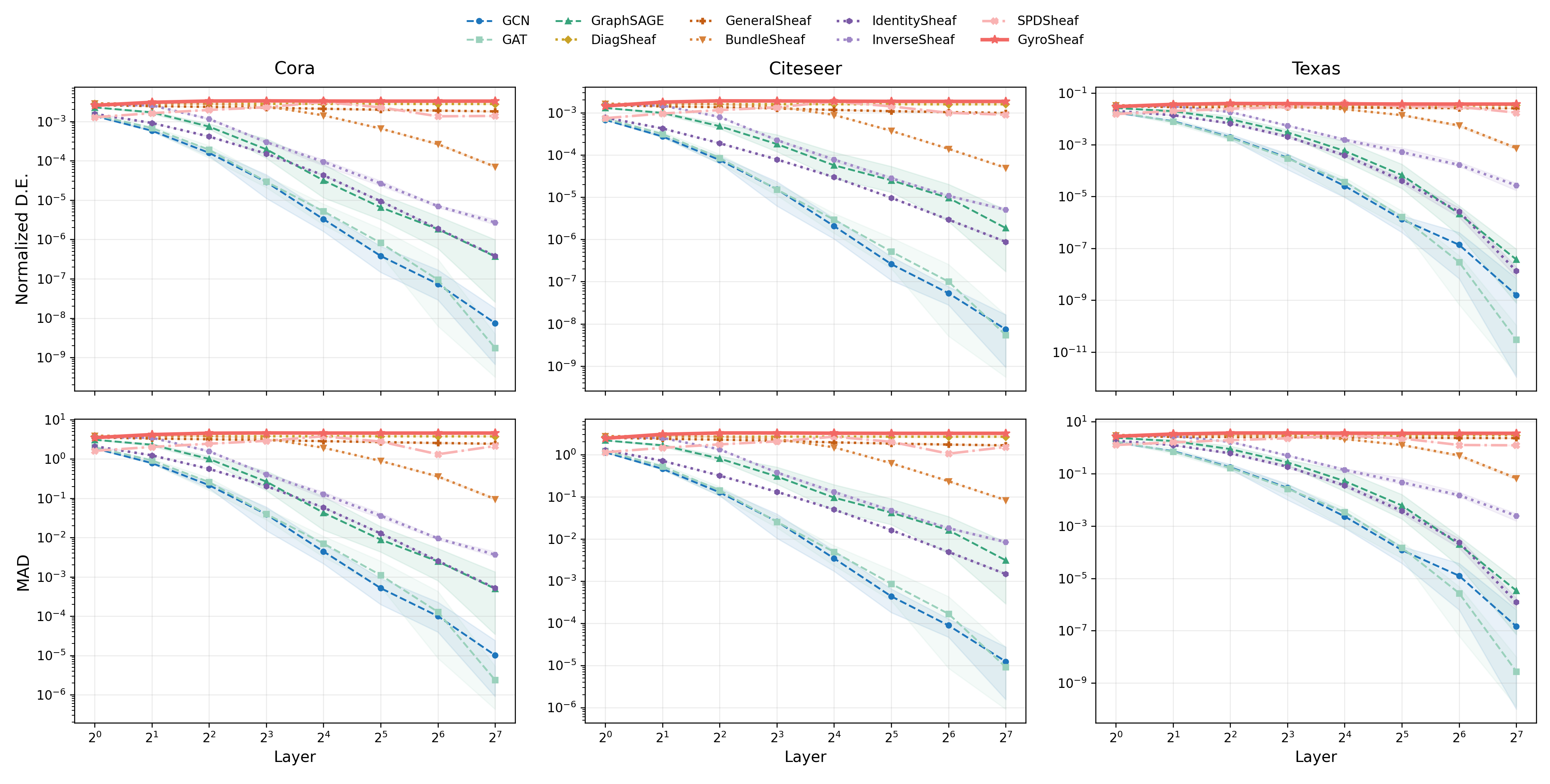}
    \caption{Normalized Dirichlet energy (top) and MAD (bottom) on Cora, Citeseer, and Texas in the untrained regime. Curves and shaded bands show per-seed mean and min–max range over five seeds.}
    % \textcolor{blue}
% {
% % \begin{CJK}{UTF8}{gbsn}{
% % inverse和identity表现不好：用index jump解释？，或者是和理论match的地方 以及bundle和其他sheaf都是欧氏空间，为什么就bundle表现的不好，怎么解释这里，怎么用holonomy那里去解释}
% % \end{CJK}
% % }
% }
  \label{fig:oversmoothing_trajectory}
\end{figure}

\subsection{Untrained Regime: Structural Diagnostic}
\label{sec:untrained}
Figure~\ref{fig:oversmoothing_trajectory} reports normalized Dirichlet energy and MAD on Cora, Citeseer, and Texas. The trajectories confirm the four-level ordering predicted by Theorem~\ref{thm:main}. Baseline and both Tier~1 negative controls collapse by six to eight orders of magnitude within 128~layers---despite Tier 1's enlarged stalks and nontrivial sheaf-like structure, they decay to the \emph{same level} as the rank-1 baselines, confirming that index jump without the holonomy conditions is insufficient. The Tier~2 learnable sheaves substantially delay the decay (with BundleSheaf slightly faster, consistent with the orthogonal constraint limiting holonomy flexibility), while the Tier~3 geometric sheaves maintain both metrics essentially flat across all depths and datasets, reflecting the combined effect of satisfied conditions and a larger index jump.

\begin{figure}
    \centering
    \includegraphics[width=0.85\linewidth]{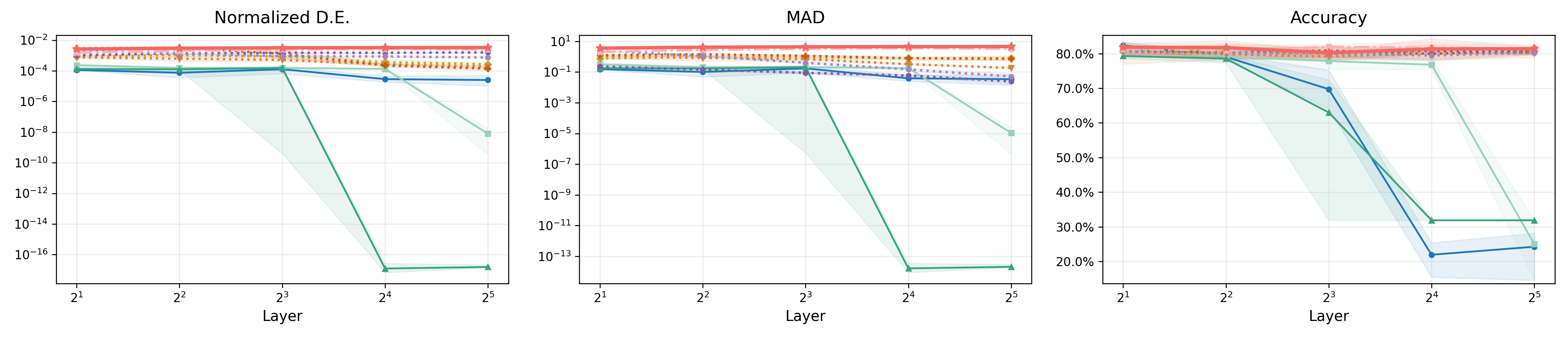}
    \caption{Normalized Dirichlet energy 
(left), MAD (middle), and test accuracy (right) on Cora in trained regime. Legend and shading as in Figure~\ref{fig:oversmoothing_trajectory}.}
    \label{fig:oversmoothing_trajectory2}
\end{figure}

\subsection{Trained Regime: End-to-End Confirmation}
\label{sec:trained}
 
Figure~\ref{fig:oversmoothing_trajectory2} reports normalized Dirichlet energy, MAD, and test accuracy on Cora across depths. Baseline accuracy drops sharply beyond $L{=}8$, while all sheaf models, including Tier~1 controls, maintain ${\sim}80\%$ accuracy, confirming that the enlarged harmonic capacity of Tiers~2--3 does not come at the cost of finite-depth performance. The structural distinctions between Tiers~1 and~2, clearly separated in the untrained regime, are masked by finite-depth optimization in the trained regime, which is expected: the operator-level criterion of Theorem~\ref{thm:main} characterizes intrinsic capacity in the infinite-depth limit, complementary to finite-depth trained behavior. We also note that GCN's Dirichlet energy does not decay substantially despite its accuracy degradation, consistent with the bias-induced energy plateau in \cite{rusch2023survey}'s Figure 3, further illustrating that trained-regime energy trajectories do not directly reflect operator-level oversmoothing behavior. Together, these observations confirm that the untrained diagnostic is the appropriate probe for the operator-level harmonic capacity of Theorem~\ref{thm:main}.
\section{Conclusion}
We presented the first relative criterion for anti-oversmoothing capacity in Sheaf Neural Networks. Theorem~\ref{thm:main} establishes that a positive index jump, combined with natural conditions on holonomy and cohomological structure, guarantees genuine harmonic enlargement beyond the Euclidean baseline. We extended this framework to nonlinear stalks via local linearisation and instantiated it with GyroSheaf. Experiments across ten models confirm the criterion.

% ---- Acknowledgments & References ----
% \begin{ack}
%   \input{sections/acknowledgments}
% \end{ack}

%\section*{References}
\medskip
{\small
\bibliographystyle{plainnat} 
% 或 unsrtnat / ieeetr / abbrvnat 等
\bibliography{bib/references}  % 对应 bib/references.bib，不要写 .bib 后缀
}

% ---- Appendix ----
\newpage
\appendix
\section{Limitations}
\label{app:limitation}
While our work introduces the first relative, index-theoretic criterion for anti-oversmoothing capacity in Sheaf Neural Networks and validates it across ten models on both homophilic and heterophilic benchmarks, it presents some limitations and areas for future exploration. Our theoretical framework and experiments focus on standard graph structures; extending the index-theoretic criterion to more general combinatorial domains such as hypergraphs and cell complexes, where the coboundary operators and Hodge decomposition have known analogues, is a natural direction, though verifying the holonomy and cohomological conditions of Theorem~\ref{thm:main} in these settings would require careful adaptations. Additionally, as our analysis demonstrates, the operator-level criterion of Theorem~\ref{thm:main} precisely characterises intrinsic harmonic capacity in the infinite-depth limit, which serves a complementary role to finite-depth trained behaviour—the structural distinctions clearly separated in the untrained regime can be partially masked by finite-depth optimisation, as expected from an operator-theoretic guarantee. Investigating how the index-theoretic criterion interacts with training dynamics to further inform architectural choices in practical finite-depth settings is a promising direction for future work.

\section{Details in Section 3}\label{app: B}
\subsection{Details of ~\ref{sec:1} (Sheaves on Graphs)}
To bridge the gap between the continuous fiber bundle $\pi: \mathcal{E} \to M$ and the graph-based construction, we formalize the following process.

\textbf{Description of stalks and maps.}
The graph $G=(V,E)$ is viewed as a 1-dimensional cell complex sampled from the manifold $M$. 
The node stalks $\mathcal{F}_v$ are the fibers $\pi^{-1}(v)$ of the continuous bundle.
 For an edge $e = (u,v)$, the edge stalk $\mathcal{F}_e$ is the fiber over the midpoint of $e$ or a shared representative space.
The restriction maps $\mathcal{F}_{u \to e}$ and $\mathcal{F}_{v \to e}$ are the discrete realizations of the connection form $\omega$ defined in Section 2.1. Specifically, $\mathcal{F}_{v \to e}$ represents the parallel transport from $v$ onto the edge $e$.

\textbf{Cochains and operator grounding.}
The space of $0$-cochains $\mathcal{C}^0(G, \mathcal{F})$ is the discrete analog of the space of smooth sections $\Gamma(M, \mathcal{E})$. A $0$-cochain $f \in \mathcal{C}^0$ assigns a local feature $f(v) \in \mathcal{F}_v$ to each vertex. The $1$-cochains $\mathcal{C}^1(G, \mathcal{F})$ capture the "difference" of these features across edges, which corresponds to $\Omega^1(M, s^*\mathcal{V})$. The discrete coboundary operator $\delta: \mathcal{C}^0 \to \mathcal{C}^1$ measures the failure of a $0$-cochain to be a global parallel section, directly simulating the continuous operator $d_0 s = s^* \omega$.

\subsection{Details of ~\ref{sec:2} (Coboundary Operator)}

This section provides the rigorous formulation of the discrete coboundary operator introduced in Definition \ref{def: def2}, emphasizing its role as a measure of local incompatibility on non-linear stalks.

\textbf{Geometric interpretation of $\delta$.}
The discrete coboundary operator $\delta$ serves as the discrete analogue of the connection-induced differential $d_0 s = s^* \omega$. For a $0$-cochain $f \in \mathcal{C}^0(G, \mathcal{F})$ (which assigns a local feature $f(v) \in \mathcal{F}_v$ to each node), the operator evaluates the failure of $f$ to form a globally parallel section. Along any edge $e=(u,v)$, the maps $\mathcal{F}_{u \to e}$ and $\mathcal{F}_{v \to e}$ transport the node features into the common edge stalk $\mathcal{F}_e$. The quantity $(\delta f)_e$ must represent the intrinsic geometric difference between these transported features. 

As in the remark after Definition~\ref{def: def2}, the output $(\delta f)_e$ technically lies in a general type space with no natural algebra structure on it, the key point of defining $\delta$ is how to formulate the ``difference'' in terms of specific sheaf types. One typical approach is to try to identify the algebraic structure inherent in the space, and another way is utilizing local linearize technique to identify algebraic structure on tangent space.

\begin{example}
[Linear sheaf coboundary]
One can verify above discussion by its behavior on classical linear sheaves. Suppose the stalks $\mathcal{F}_v$ and $\mathcal{F}_e$ are Euclidean vector spaces, and the restriction maps are linear transformations. The difference natively becomes the standard vector difference. In this case, the operator reduces exactly to:
\[ (\delta f)_{u \to v} = \mathcal{F}_{v \to e} f(v) - \mathcal{F}_{u \to e} f(u) \]
which is precisely the classical definition of the degree-$0$ coboundary operator in Euclidian sheaf and other previous work~\cite{SNN2020, bodnar2022nsd, hansen2019toward}.
\end{example}

\subsection{Proof of Proposition~\ref{prop1} (Energy-Laplacian Correspondence)}
We prove the three properties connecting the sheaf Laplacian $\mathcal{L} = \delta^\ast \delta$ to the Dirichlet energy and diffusion dynamics.
\begin{proof}
	\begin{itemize}[leftmargin=2em]
		\item \textit{Step 1: Gradient of the discrete energy.}
		Given the admissible pairing $\langle \cdot, \cdot \rangle$ on the cochain spaces, the discrete energy is $\mathcal{E}(x) = \frac{1}{2} \langle \delta x, \delta x \rangle$. We consider the first variation of this energy along an arbitrary test $0$-cochain $v$:
		\[ \delta \mathcal{E}(x)[v] = \left. \frac{d}{d\epsilon} \left( \frac{1}{2} \langle \delta(x + \epsilon v), \delta(x + \epsilon v) \rangle \right) \right|_{\epsilon=0} = \langle \delta x, \delta v \rangle \]
		By the defining property of the adjoint operator $\delta^\ast$, we can move the coboundary operator to the left side of the pairing:
		\[ \langle \delta x, \delta v \rangle = \langle \delta^\ast \delta x, v \rangle = \langle \mathcal{L}x, v \rangle \]
		Since $\delta \mathcal{E}(x)[v] = \langle \nabla \mathcal{E}(x), v \rangle$ holds for all variations $v$, we conclude that the gradient of the energy functional is exactly $\nabla \mathcal{E}(x) = \mathcal{L}x$.
		\item \textit{Step 2: Measurement of local inconsistency (ii).}
		To see how $\mathcal{L}$ encodes the total inconsistency, we evaluate the inner product of the Laplacian applied to $x$ with $x$ itself:
		\[ \langle \mathcal{L}x, x \rangle = \langle \delta^\ast \delta x, x \rangle \]
		Applying the adjoint relation again, we obtain:
		\[ \langle \delta^\ast \delta x, x \rangle = \langle \delta x, \delta x \rangle = \|\delta x\|^2 \]
		This confirms that the quadratic form of the Laplacian exactly recovers the accumulated edgewise geometric discrepancies.
		\item \textit{Step 3: Positivity and diffusion stability (iii).}
		From Step 2, we have $\langle \mathcal{L}x, x \rangle = \|\delta x\|^2 \ge 0$ for any $x \in \mathcal{C}^0$. Thus, $\mathcal{L}$ is inherently positive semi-definite. When the system evolves according to the diffusion equation $\partial_t x_t = -\mathcal{L}x_t$, the time evolution of the energy is given by:
		\[ \frac{d}{dt} \mathcal{E}(x_t) = \langle \nabla \mathcal{E}(x_t), \partial_t x_t \rangle = \langle \mathcal{L}x_t, -\mathcal{L}x_t \rangle = -\|\mathcal{L}x_t\|^2 \le 0 \]
		This strictly non-positive derivative guarantees that the diffusion process dissipates the inconsistency energy monotonically, driving the signal toward a stable harmonic state where $\mathcal{L}x_t = 0$.
	\end{itemize}
\end{proof}

\subsection{Proof of Theorem~\ref{thm:thm2} (0th-order Discrete Hodge Decomposition)}
We state the Hodge decomposition for 
0-cochains on discrete sheaves, where the cochain pairing and its adjoint are already defined. This proof establishes the identity between the discrete harmonic space and the space of global sections by demonstrating the equivalence of their kernels.
\begin{proof}
	\begin{itemize}[leftmargin=2em]
		 \item \textit{Step 1: The identity $\ker \mathcal{L} = \ker \delta$.} 
		We prove the equivalence through double inclusion based on the positive definiteness of the admissible pairing.
		\begin{itemize}[leftmargin=2em]
			\item $\supseteq:$ Let $f \in \ker \delta$, such that $\delta f = 0$. By the definition of the discrete Laplacian $\mathcal{L} = \delta^\ast \delta$:
			\[ \mathcal{L} f = \delta^\ast (\delta f) = \delta^\ast(0) = 0 \]
			This implies $f \in \ker \mathcal{L}$.
			\item $\subseteq:$ Conversely, let $f \in \ker \mathcal{L}$, meaning $\mathcal{L} f = 0$. We evaluate the quadratic form:
			\[ 0 = \langle \mathcal{L}f, f \rangle = \langle \delta^\ast \delta f, f \rangle \]
			Applying the definition of the adjoint operator $\delta^\ast$:
			\[ \langle \delta^\ast \delta f, f \rangle = \langle \delta f, \delta f \rangle  \]
			Since the pairing is admissible, $\langle \delta f, \delta f \rangle = 0$ implies $\delta f = 0$ exactly. Thus, $f \in \ker \delta$.
		\end{itemize}
		
			\item \textit{Step 2: Connection to Sheaf Cohomology.}
		By the definition of the discrete sheaf structure, a $0$-cochain $f$ is a global section if it is compatible with all restriction maps, which is precisely the condition $(\delta f)_e = 0$ for all $e \in E$. In the language of sheaf cohomology, the $0$-th cohomology group is defined as the kernel of the first differential:
		\[ H^0(G, \mathcal{F}) = \ker \delta \]
	\end{itemize}
	Combining Step 1 and Step 2, we obtain the isomorphism $\ker \mathcal{L} \cong H^0(G, \mathcal{F})$.
	This result ensures that the steady states of the discrete diffusion process $\partial_t x = -\mathcal{L}x$ are precisely the global sections of the sheaf.
\end{proof}

\subsection{Technical Details for Proposition~\ref{prop:anti_oversmoothing_principle} (Harmonic Characterisation of OverSmoothing)}
\label{app:anti_oversmoothing_details}

 The principle stated in Proposition~\ref{prop:anti_oversmoothing_principle} provides an operator-theoretic analysis of representation collapse.

The discrete diffusion process is governed by the linear system $\partial_t f(t) = -\mathcal{L} f(t)$. Given that $\mathcal{L} = \delta^* \delta$ is a symmetric positive semi-definite operator, let $\{(\lambda_i, \phi_i)\}_{i=1}^n$ be its eigenpairs, where $0 = \lambda_1 = \cdots = \lambda_k < \lambda_{k+1} \le \cdots \le \lambda_n$. The general solution for the feature evolution is:
\[ 
f(t) = \sum_{i=1}^k \langle f^{(0)}, \phi_i \rangle \phi_i + \sum_{j=k+1}^n e^{-\lambda_j t} \langle f^{(0)}, \phi_j \rangle \phi_j 
\]
As $t \to \infty$, the terms associated with strictly positive eigenvalues vanish. The long-time limit is exactly the projection of the initial state onto the kernel of $L$, which coincides with previous discussion of Hodge decomposition:
\[ 
f^{(\infty)} = \lim_{t \to \infty} f(t) = \sum_{i=1}^k \langle f^{(0)}, \phi_i \rangle \phi_i = P_{\mathcal{H}^0} f^{(0)} 
\]
When the input signal is centered or nearly centered ($\mu \approx 0$), the ratio simplifies to $\mathbb{E}[\mathcal{R}(\mathcal{F})] \approx 1/N$. This provides a rigorous quantitative characterization of operator-induced oversmoothing: as the graph size $N$ increases, the portion of discriminative information preserved by standard diffusion vanishes at a rate of $\mathcal{O}(1/N)$. Our framework aims to mitigate this by enriching $\mathcal{H}^0$ such that $\dim(\mathcal{H}^0) \gg 1$, effectively lifting the preservation ratio away f rom this zero-limit.

\begin{figure}[H]
	\centering
	\includegraphics[width=0.5\textwidth]{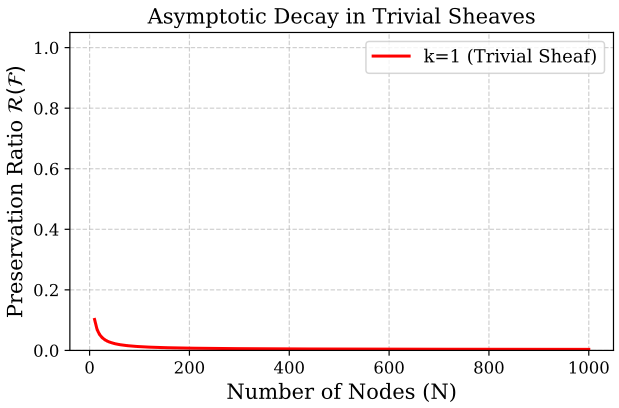}
	\caption{ Asymptotic decay of $\mathcal{R}(\mathcal{F})$ in trivial sheaves ($dim(\mathcal{H}^0)=1)$. }
\end{figure}
\begin{figure}[H]
	\centering
	\includegraphics[width=0.5\textwidth]{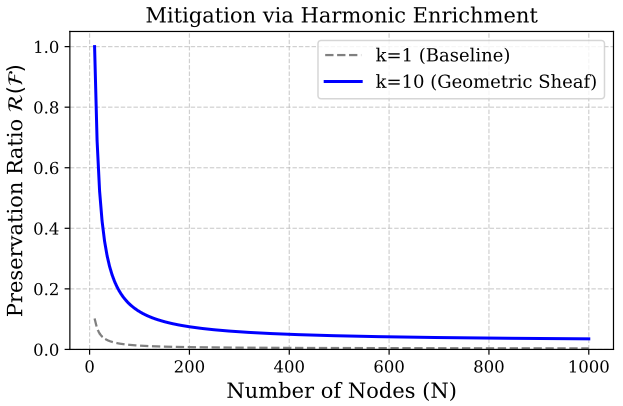}
	\caption{ Mitigation of representation collapse through intermediate harmonic enrichment. }
\end{figure}
\begin{figure}[H]
	\centering
	\includegraphics[width=0.5\textwidth]{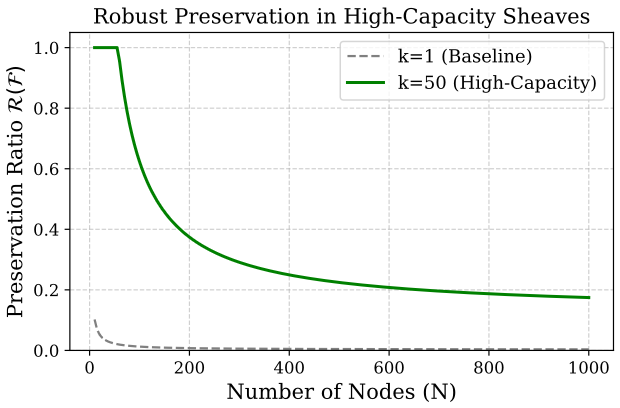}
	\caption{ Robust information preservation in high-capacity geometric sheaf frameworks. }
\end{figure}

\textbf{Two senses of harmonic enrichment.}
The capacity of a sheaf framework to resist oversmoothing is determined by the size of its harmonic space. We formalize this enrichment in two mathematically distinct senses:
\begin{enumerate}[leftmargin=2em]
	\item \textit{Weak Enrichment (Dimensional Capacity):} A sheaf $\mathcal{F}_2$ is weakly richer than $\mathcal{F}_1$ if $\dim(\mathcal{H}^0_2) > \dim(\mathcal{H}^0_1)$. 
	\item \textit{Strong Enrichment (Subspace Inclusion):} A sheaf $\mathcal{F}_2$ is strongly richer if $\mathcal{H}^0_1 \subsetneq \mathcal{H}^0_2$. Under this structural condition, we obtain a deterministic, point-wise guarantee. By the decomposition of subspaces ($\mathcal{H}^0_2 = \mathcal{H}^0_1 \oplus W$), we have $\|P_{\mathcal{H}^0_2} f^{(0)}\|^2 \ge \|P_{\mathcal{H}^0_1} f^{(0)}\|^2$ for \textit{any} arbitrary input $f^{(0)}$.
\end{enumerate}
Standard graph diffusion restricts $\mathcal{H}^0$ to the 1-dimensional constant subspace. By employing geometry-aware restriction maps, our framework seeks to achieve strict structural enrichment (strong enrichment), lifting the preservation ratio deterministically and unconditionally. This sets the stage for the rigorous index improvement theory developed in Section~\ref{subsec:index_improvement_framework}.

\section{An Index-Theoretic Analysis of oversmoothing}\label{app:C}
\subsection{Algebraic Foundations of the Discrete Index~\ref{subsec:index_improvement_framework}}\label{app:C.1}

\textbf{The discrete Hodge--Dirac framework.}
This appendix provides the rigorous algebraic proofs for the discrete sheaf index and its holonomy-induced harmonic expansion.

Recall the $0$-th order discrete Hodge decomposition $\mathcal{C}^0 = \ker \mathcal{L} \oplus \mathrm{Im} \delta^\ast$. The isomorphism $H^0(G, \mathcal{F}) \cong \ker \mathcal{L}$ identifies the harmonic space with the $0$-th cohomology group. Consequently, the discrete index explicitly constrains the harmonic dimension:
\begin{equation}
	\dim \ker \mathcal{L} = \mathrm{Ind}(\mathcal{F}) + \dim \ker \delta^\ast. \label{eq:harmonic_capacity}
\end{equation}
This formula establishes the structural link between the analytic capacity of the diffusion operator and the topological invariants of the sheaf.

The stability of $\mathrm{Ind}(\mathcal{F})$ under the diffusion flow $\partial_t f = -Lf$ is rigorously verified via the discrete heat kernel supertrace (see, e.g., \cite{BGV} for general supertrace properties). Let $\mathcal{L}_0 = \delta^\ast \delta$ and $\mathcal{L}_1 = \delta \delta^\ast$. For any $t > 0$, the supertrace is defined as:
\begin{equation}
	\mathrm{Str}(e^{-tL}) = \mathrm{Tr}(e^{-t L_0}) - \mathrm{Tr}(e^{-t L_1}).
\end{equation}
Let $\{ \lambda_i \}$ be the strictly positive eigenvalues of $\mathcal{L}_0$. The coboundary operator $\delta$ acts as an isomorphism between the positive eigenspaces $E_{\lambda_i}(\mathcal{L}_0)$ and $E_{\lambda_i}(\mathcal{L}_1)$, since the intertwining relation $\delta \mathcal{L}_0 = \mathcal{L}_1 \delta$ holds. Consequently, the spectral contributions from all strictly positive eigenvalues cancel exactly in the alternating sum:
\begin{align}
	\mathrm{Str}(e^{-t\mathcal{L}}) &= \left( \dim \ker \mathcal{L}_0 + \sum_{\lambda_i > 0} e^{-t \lambda_i} \right) - \left( \dim \ker \mathcal{L}_1 + \sum_{\lambda_i > 0} e^{-t \lambda_i} \right) \\
	&= \dim \ker \mathcal{L}_0 - \dim \ker \mathcal{L}_1.
\end{align}
Since $\ker \mathcal{L}_0 = \ker \delta$ and $\ker \mathcal{L}_1 = \ker \delta^\ast$, this yields $\mathrm{Str}(e^{-t\mathcal{L}}) = \mathrm{Ind}(\mathcal{F})$. Taking the limit $t \to \infty$ confirms that the deep-layer representational capacity is intrinsically locked to the topological index, invariant to the diffusion time.

In deep graph neural networks, oversmoothing refers to the exponential decay of node features into a trivial subspace that carries no discriminative signal. Modeling forward propagation as continuous-time heat diffusion $\partial_t f = -\mathcal{L}f$ with graph Laplacian $\mathcal{L}$, the asymptotic representation capacity is fully characterized by $\dim\ker \mathcal{L}$.

To analyze this capacity, we lift graph signals to a cellular sheaf $\mathcal{F}$ over a graph $G=(V,E)$. Denote by $\mathcal{C}^0 = \bigoplus_{v\in V}\mathcal{F}(v)$ and $\mathcal{C}^1 = \bigoplus_{e\in E}\mathcal{F}(e)$ the spaces of $0$-cochains (node features) and $1$-cochains (edge features), respectively, and let $\operatorname{rank}(\mathcal{F})=r$.

\begin{definition}[Hodge--Dirac operator]
The coboundary operator $\delta\colon\mathcal{C}^0\to\mathcal{C}^1$ measures local inconsistency across edges. Given an admissible inner product, its adjoint $\delta^\ast\colon\mathcal{C}^1\to\mathcal{C}^0$ acts as the discrete divergence, and the $0$-Laplacian is $\mathcal{L}_0 = \delta^\ast\delta$. The discrete Dirac operator on $\mathcal{C}^0\oplus\mathcal{C}^1$ is $D_{\mathcal{F}} = \delta+\delta^\ast$, with chiral component $D_{\mathcal{F}}^{+} = \delta\colon\mathcal{C}^0\to\mathcal{C}^1$.
\end{definition}

By discrete Hodge theory, $\ker L_0 = \ker\delta \cong H^0(\mathcal{F})$, which identifies the space of global sections, i.e., parallel node signals. The first cohomology $H^1(\mathcal{F})$ captures harmonic edge flows. In the graph-level case where $\mathcal{C}^2=0$, this reduces to $\ker\delta^\ast$: divergence-free edge flows representing conserved cycle currents that cannot be expressed as gradients of node potentials. In sheaves with nontrivial transport, holonomy around fundamental cycles introduces global compatibility obstructions, giving rise to geometric frustration.

\textbf{Heat dynamics and the analysis index.}
We state more details of the analysis index. The continuous-depth forward propagation of a diffusion-type GNN
can be written as
\[
\partial_t f=-\mathcal{L}_0f,
\qquad
f(t)=e^{-t\mathcal{L}_0}f(0),
\]
where $\mathcal{L}_0=\delta_{\mathcal F}^\ast\delta_{\mathcal F}$ is the $0$-cochain sheaf Laplacian. Since $\mathcal{L}_0$ is non-negative and self-adjoint, the heat operator satisfies $\lim_{t\to\infty}e^{-t\mathcal{L}_0}\to P_{\ker \mathcal{L}_0}$, where $P_{\ker \mathcal{L}_0}$ is the projection onto the harmonic space. Consequently, $\lim_{t\to\infty}\operatorname{Tr}(e^{-t\mathcal{L}_0})=\dim\ker \mathcal{L}_0$.
Thus the deep-limit survival space of node features is precisely the space
of global sections.

However, the heat trace $\operatorname{Tr}(e^{-t\mathcal{L}_0})$ itself is not a
topological invariant at finite depth. The topologically stable quantity is
the heat traceassociated with the Hodge--Dirac pair. Let $D_{\mathcal F}$ act on $\mathcal C^0(G;\mathcal F)\oplus \mathcal C^1(G;\mathcal F)$ defined by
\[
D_{\mathcal F}
=
\begin{pmatrix}
	0 & \delta_{\mathcal F}^\ast\\
	\delta_{\mathcal F} & 0
\end{pmatrix}
\]
with chiral components
\[
D_{\mathcal F}^{+}
=
\delta_{\mathcal F}:\mathcal C^0(G;\mathcal F)\to \mathcal C^1(G;\mathcal F),
\qquad
D_{\mathcal F}^{-}
=
\delta_{\mathcal F}^\ast:\mathcal C^1(G;\mathcal F)\to \mathcal C^0(G;\mathcal F).
\]
The analytic index is defined by
\[
\operatorname{ind}(D_{\mathcal F}^{+})
:=
\dim\ker D_{\mathcal F}^{+}
-
\dim\ker D_{\mathcal F}^{-}=\dim\ker\delta_{\mathcal F}
-
\dim\ker\delta_{\mathcal F}^{\ast}.
\]
By the discrete Hodge decomposition, this becomes $\operatorname{ind}(D_{\mathcal F}^{+})=\dim H^0(\mathcal F)-\dim H^1(\mathcal F)$.

\subsection{Proof of Proposition~\ref{prop:index_jump} (Topological Index and Index Jump)}
\label{proof.4}
\begin{proof}[Proof of Prop~\ref{prop:index_jump}]
	By the Euler--Poincar\'e formula applied to the sheaf cochain complex,
	\[
	\dim H^0(\mathcal F)-\dim H^1(\mathcal F)
	=
	\dim\mathcal C^0(G;\mathcal F)-\dim\mathcal C^1(G;\mathcal F).
	\]
	Since $\mathcal F$ has rank $r$,
	\[
	\dim\mathcal C^0(G;\mathcal F)=r|V|,
	\qquad
	\dim\mathcal C^1(G;\mathcal F)=r|E|.
	\]
	Therefore
	\[
	\operatorname{ind}(D_{\mathcal F}^{+})
	=
	r|V|-r|E|
	=
	r\chi(G).
	\]
	The same argument for the rank-$n$ Euclidean sheaf $\xi$ gives
	\[
	\operatorname{ind}(D_{\xi}^{+})=n\chi(G).
	\]
	Subtracting the two index identities gives
	\[\operatorname{ind}(D_{\mathcal F}^{+})-\operatorname{ind}(D_{\xi}^{+})
	=
	(r-n)\chi(G).
	\]
\end{proof}

\subsection{Details of Remark~\ref{remark2} (Index Jump Alone is Insufficient)}
The index comparison exposes a fundamental geometric tension:
\[
\dim H^0(\mathcal F)-\dim H^0(\xi)
=
\dim H^1(\mathcal F)-\dim H^1(\xi)
+
(r-n)\chi(G).
\]
For any connected graph with nontrivial cycle rank, one has $|E|\ge |V|$ and $\chi(G)\le 0$. Thus increasing the stalk dimension from $n$ to $r$ does not automatically
increase the harmonic capacity. When $r>n$ and $\chi(G)<0$, the rank term $(r-n)\chi(G)$ is strictly negative. Hence any strict increase of $H^0(\mathcal F)$ over
the Euclidean baseline must be supported by sufficient growth of the
frustration space $H^1(\mathcal F)$, which is interpreted as the divergence free space.

At the same time, the absolute dimension of the harmonic space should not be interpreted as a complete criterion for avoiding oversmoothing. One more promising approach is to focus on the harmonic space relation compares to simple baseline sheaf. More specifically, if there is a comparison map
\[
\iota_\ast:H^0(\xi)\longrightarrow H^0(\mathcal F),
\]
one should measure the additional non-dissipative capacity by the excess
quotient $H^0(\mathcal F)/\iota_\ast H^0(\xi)$, or at the level of dimensions, by $\dim H^0(\mathcal F)-\dim H^0(\xi)$. A positive excess dimension means that $\mathcal F$ contains harmonic modes
beyond the Euclidean constant-channel subspace. If these additional modes
are node-separating, they provide a structural mechanism against
oversmoothing.

Thus the index jump should be understood as a relative harmonic-capacity certificate, rather than as a pointwise oversmoothing theorem by itself. It does not merely count the size of a single harmonic space; it compares the
geometric sheaf against the Euclidean baseline and identifies when additional non-dissipative modes must appear. In the dimension-matched case $r=n$, the
rank penalty disappears and the comparison reduces to
\[
\dim H^0(\mathcal F)-\dim H^0(\xi)
=
\dim H^1(\mathcal F)-\dim H^1(\xi).
\]
Hence, in this case, growth of the frustration space $H^1(\mathcal F)$ is
exactly reflected as excess harmonic capacity beyond the Euclidean
oversmoothing baseline. In this sense, resistance to oversmoothing is
mediated by the growth of geometric frustration.

\subsection{Details of Theorem ~\ref{thm:main} (Genuine Harmonic Inclusion)} \label{app:thm5}
Theorem~\ref{thm:main} in the main text condenses the results developed in full detail throughout this appendix.
We investigate when the inclusion $\iota_\ast:H^0(\xi)\to H^0(\mathcal F)$ holds, beginning with the simplest case.
\begin{example}
    [Trivial holonomy as channel replication]
Assume that the sheaf connection has trivial holonomy. In this totally flat
regime, the rank-$r$ sheaf has {global trivialization} and decomposes into
$r$ independent scalar Euclidean channels:
\[
\mathcal F \cong \xi_{\mathrm{sc}}\otimes \mathbb R^r,
\]
where $\xi_{\mathrm{sc}}$ denotes the rank-$1$ scalar Euclidean sheaf with
stalk $\mathbb R$. Hence
\[
\dim H^k(\mathcal F)=r\beta_k(G),
\qquad
\beta_k(G):=\dim H^k(\xi_{\mathrm{sc}}).
\]
On the other hand, the Euclidean baseline $\xi$ has rank $n$ and satisfies
\[
\xi=\xi_{\mathrm{sc}}\otimes\mathbb R^n,
\qquad
\dim H^k(\xi)=n\beta_k(G).
\]
Therefore, one has $\dim H^0(\mathcal F)-\dim H^0(\xi)=(r-n)\beta_0(G)$. If $G$ is connected, then $\beta_0(G)=1$, and hence
\[
\dim H^0(\mathcal F)-\dim H^0(\xi)=r-n.
\]
Thus the totally flat case gives a strict capacity gain only by increasing
the number of trivial channels, i.e. only when $r>n$. In the
dimension-matched case $r=n$, trivial holonomy gives no excess harmonic
capacity over the Euclidean baseline.
\end{example}
\begin{remark}[Decoupling under trivial holonomy]
The decoupling follows from the existence of a global parallel frame. Fix a
base vertex $v_0\in V$. For each vertex $v\in V$, choose a path
$\gamma_{v_0v}$ from $v_0$ to $v$ and let
\[
Q_v:\mathcal F(v_0)\to\mathcal F(v)
\]
denote the parallel transport along $\gamma_{v_0v}$. Since the holonomy
around every closed cycle is trivial, $Q_v$ is independent of the chosen
path. Hence the maps $\{Q_v\}_{v\in V}$ give a global trivialization of the
sheaf. In this trivialization, every edge transport becomes the identity:
\[
Q_v^{-1}P_{vu}Q_u=I_r,
\]
for each oriented edge $u\to v$. Consequently, the coboundary operator
splits as $\delta_{\mathcal F}=\delta_{\xi_{\mathrm{sc}}}\otimes I_r$,
where $\xi_{\mathrm{sc}}$ is the rank-$1$ scalar Euclidean sheaf with stalk
$\mathbb R$. Thus
\[
\mathcal C^i(G;\mathcal F)
\cong
\mathcal C^i(G;\xi_{\mathrm{sc}})\otimes\mathbb R^r\qquad i=0,1.
\]

Taking cohomology gives $H^k(\mathcal F)
\cong
H^k(\xi_{\mathrm{sc}})\otimes\mathbb R^r$, and hence
\[
\dim H^k(\mathcal F)
=
r\dim H^k(\xi_{\mathrm{sc}}).
\]
Since $\xi=\xi_{\mathrm{sc}}\otimes\mathbb R^n$, one also has
\[
\dim H^k(\xi)
=
n\dim H^k(\xi_{\mathrm{sc}}).
\]
Therefore the totally flat regime does not create genuinely new geometric
cycle modes; it only replicates the scalar Euclidean cohomology in
independent channels. This motivates the non-trivial holonomy case, where
additional harmonic capacity must come from geometric frustration rather
than from trivial channel replication.
\end{remark}

\textbf{Non-trivial holonomy.} 
To make more intrinsic complexity of structure for anti-oversmoothing, one should not expect all cocycle-closed transports (holonomy) to be simultaneously trivialized. Instead, the Bochner--Weitzenb\"ock-type formula inspires to separate the cochains into two effect parts: the
rough discrepancy of edge-level modes and the curvature obstruction carried by the connection. We analysis the curvature influnced part, which is represented on the sheaf framework by the cocycle transport difference induced from restriction maps.

The inclusion of cochains means that part of the Euclidean
baseline is preserved inside the geometric sheaf. In representation-theoretic
terms, instead of requiring \(R=0\) or equivalently that the
stalk transport is trivial on the whole stalk, we require the restriction maps induces well-defined stalk transport, and the transport's holonomy representation has a nontrivial fixed
subspace. That is, there is a cocycle transport-fixed subspace
\[
W:=\bigcap_{\gamma\in\pi_1(G,v_0)}\ker(\rho(\gamma)-I)\subseteq\mathcal F(v_0),\qquad\dim W=k.
\]
\(W\neq \mathcal F(v_0)\) sites that the excess is not a full trivial-channel replication of the Euclidean baseline; the complementary directions carry nontrivial holonomy.

Every vector in \(W\) is fixed by parallel transport around all fundamental
cycles. Hence it generates a global parallel section of \(\mathcal F\). In
particular, when \(G\) is connected,
\[
H^0(\mathcal F)
\cong
\operatorname{Fix}(\rho),
\qquad
\dim H^0(\mathcal F)=k.
\]
Relative to the rank-\(n\) Euclidean baseline \(\xi\), this gives a strict increase of
\(H^0\) only when $k>\dim H^0(\xi)=n$ for connected \(G\). If one compares with the scalar Euclidean sheaf
\(\xi_{\mathrm{sc}}\), the corresponding condition is \(k>1\).

Recall in \cite{BGV} that the heat traceis
\[
\operatorname{Str}(e^{-tD_{\mathcal F}^2})
=
\operatorname{Tr}(e^{-tL^0_{\mathcal F}})
-
\operatorname{Tr}(e^{-tL^1_{\mathcal F}}).
\]
By the finite-dimensional Hodge decomposition, the nonzero spectra of \(L^0_{\mathcal F}\) and \(L^1_{\mathcal F}\) agree, so their nonzero heat contributions cancel in the supertrace. Hence $$\operatorname{Str}(e^{-tD_{\mathcal F}^2})
=
\dim H^0(\mathcal F)-\dim H^1(\mathcal F)
=
\operatorname{ind}(D^+_{\mathcal F})$$
which is just the interpretation of Atiyah-Singer index theorem combined with hodge theory. Equivalently, $\operatorname{Tr}(e^{-tL^0_{\mathcal F}})
=
\operatorname{ind}(D^+_{\mathcal F})
+
\operatorname{Tr}(e^{-tL^1_{\mathcal F}})$. Applying and subtracting this to \(\mathcal F\) and \(\xi\) gives
\[
\operatorname{Tr}(e^{-tL^0_{\mathcal F}})
-
\operatorname{Tr}(e^{-tL^0_{\xi}})
=
\Delta_{\mathrm{ind}}(\mathcal F,\xi)
+
\left[
\operatorname{Tr}(e^{-tL^1_{\mathcal F}})
-
\operatorname{Tr}(e^{-tL^1_{\xi}})
\right].
\]
Taking the large-time limit, using
\[
\lim_{t\to\infty}\operatorname{Tr}(e^{-tL^i_{\mathcal F}})
=
\dim H^i(\mathcal F),
\]
we obtain the corrected relative capacity identity, which is equivalent with the index jump equation Proposition~\ref{prop:index_jump}:
\[
\dim H^0(\mathcal F)-\dim H^0(\xi)
=
\Delta_{\mathrm{ind}}(\mathcal F,\xi)
+
\Delta_{\mathrm{Str}}^{(1)}(\mathcal F,\xi),
\]
where
\[
\Delta_{\mathrm{Str}}^{(1)}(\mathcal F,\xi)
:=
\lim_{t\to\infty}
\left[
\operatorname{Tr}(e^{-tL^1_{\mathcal F}})
-
\operatorname{Tr}(e^{-tL^1_{\xi}})
\right]
=
\dim H^1(\mathcal F)-\dim H^1(\xi).
\]
Thus the corrected capacity statement is not a replacement of the index idea, but a completion of it: the raw index gives the Euler-balanced part, while the degree-one heat trace supplies the missing first-cohomology contribution needed to recover \(H^0\).

For 1-cochains \(H^1\), the holonomy fixed decouple in 0-cochain case above requires a stronger condition, we call it together with previous condition of $W$ the "local system condition". Considering that a fixed subspace at the base stalk controls global sections, but it does not automatically imply that the first cohomology of
the scalar Euclidean sheaf injects into \(H^1(\mathcal F)\). To obtain a
decoupled \(H^1\)-contribution, we assume that
\begin{itemize}[leftmargin=2em]
\item  \(W\) extends to a
rank-\(k\) trivial sub-sheaf, i.e. the \(W\)-channel defines
a subcomplex
\[
    \mathcal C^i(G;\xi_{\mathrm{sc}})\otimes W
    \longrightarrow\mathcal C^i(G;\mathcal F),
    \qquad i=0,1
    \]
whose induced map on first cohomology is injective:
\[
H^1(\xi_{\mathrm{sc}})\otimes W
\hookrightarrow
H^1(\mathcal F).
\]
\end{itemize}
In the local-system case, the corrected index--heat trace balance reduces exactly to the holonomy-fixed excess:
\[
\dim H^0(\mathcal F)-\dim H^0(\xi)=k-n.
\]
This explains how nontrivial holonomy can compensate the negative raw index on cyclic graphs. When \(\beta_1(G)\) is large, the raw index jump $\Delta_{\mathrm{ind}}(\mathcal F,\xi)
=
(r-n)(1-\beta_1(G))$ may be negative, and the 1-cochain heat trace correction records the cycle-level geometric frustration and restores the correct zeroth harmonic capacity. Consequently, the strict capacity condition can be certified either by the corrected analyst balance $\Delta_{\mathrm{ind}}(\mathcal F,\xi)
+
\Delta_{\mathrm{Str}}^{(1)}(\mathcal F,\xi)
\ge 1$ or by the algebraic setting $\dim W>\dim H^0(\xi)$.

We can proceed with the derivation under local system. By analysing the decoupling, one has $\dim H^1(\mathcal F)
\ge
k\dim H^1(\xi_{\mathrm{sc}})$.
Since the rank-\(n\) Euclidean baseline satisfies
\[
\xi=\xi_{\mathrm{sc}}\otimes \mathbb R^n,
\qquad
\dim H^1(\xi)=n\dim H^1(\xi_{\mathrm{sc}}),
\]
we obtain the comparison lower bound
\[
\dim H^1(\mathcal F)-\dim H^1(\xi)
\ge
(k-n)\dim H^1(\xi_{\mathrm{sc}}).
\]
Equivalently, if \(G\) is connected and $\beta_1(G)=\dim H^1(\xi_{\mathrm{sc}})$, then
\[
\dim H^1(\mathcal F)-\dim H^1(\xi)
\ge
(k-n)\beta_1(G).
\]

Combining this with the index jump, we obtain the sufficient estimate
\[
\dim H^0(\mathcal F)-\dim H^0(\xi)
\ge
(k-n)\beta_1(G)+\Delta_{\mathrm{ind}}(\mathcal F,\xi)
\]
for connected \(G\). This reduces to $\dim H^0(\mathcal F)-\dim H^0(\xi)\ge(k-n)\beta_1(G)$, which gives the sufficient condition:
\begin{equation}\label{compare condition:little beta}
    \Delta_{\mathrm{ind}}(\mathcal F,\xi)\ge (n-k)\beta_1(G)+1
\end{equation}
This inspires an adjustment to increase anti-over smoothing capacity relative to the baseline,by adjusting restriction maps to make sufficiently \(k\) forces excess harmonic capacity over the Euclidean baseline.

Full trivial holonomy is the special case \(W=\mathcal F(v_0)\), so
\(k=r\). In that case the whole sheaf complex decouples into scalar
Euclidean channels. The present condition is weaker: the connection may have
nontrivial holonomy on the complementary directions, while a rank-\(k\)
holonomy-fixed channel still contributes Euclidean cohomology to the sheaf.
In this sense, the relevant condition is not global flatness of the entire
connection, but the existence of a holonomy-fixed subcomplex supporting
non-dissipative global and cycle-level modes.

One important note is that condition \eqref{compare condition:little beta} only fits the case that $\beta_1$ is not very large. Under local system condition, the equation is equivalent to $r-n-1\ge (r-k)\beta_1(G)$, which cannot be achieved when $\beta_1(G)$ is very large, since we cannot arbitrarily increase the rank of sheaf. One way to solve this problem is to replace the graph topological term $\beta_1(G)$ by some other intrinsic structure's topological term. For example, For graphs with well-defined geometric embedding properties, replacing the graph Betti number $\beta_1(G)$ with the first-order Betti number $\beta_1(M)$ of the embedding base space $M$ is a more reasonable choice, as it directly reflects the truly decisive topological information that might otherwise be obscured by the graph structure.
Constructing the replacement topological feature based on the graph and its intrinsic geometry that applies to more general scenarios, and deriving estimation inequalities with improved adaptability inspires one future work related here.

%A useful way to remove the purely rank-dependent Euler contribution is to introduce a reduced index. Define
%\[\operatorname{ind}_{\mathrm{red}}(\mathcal F;\xi):=\operatorname{ind}(D^+_{\mathcal F})-\frac{r}{n}\operatorname{ind}(D^+_{\xi}).\]
%For the rank-\(n\) Euclidean baseline,
%\[\operatorname{ind}(D^+_{\xi})=n\chi(G),\]
%and therefore
%\[\frac{r}{n}\operatorname{ind}(D^+_{\xi})=r\chi(G).\]
%Thus, for a pure rank-\(r\) trivial-channel replication, one has
%\[\operatorname{ind}_{\mathrm{red}}(\mathcal F;\xi)=0.\]
%This reduced index removes the background Euler-rank effect and separates genuine geometric contribution from trivial channel enlargement. However, by itself it still does not recover \(h^0\), because any index quantity remains an \(\dim H^0-\dim H^1\) invariant. To obtain the actual zeroth harmonic capacity, one must add back the degree-one correction.

\subsection{Details of the SPD Sheaf Case Study~\ref{sec:casespd}}\label{app:spd_verification}
We illustrate more details between SPDSheaf $\mathcal{F}_{\mathrm{SPD}}$~\cite{our} and our framework of applying Theorem~\ref{thm:main}. Recall that the SPD sheaf is equipped with stalks modeled on \(\mathrm{SPD}_n\), congruence-type restriction maps, an SPD coboundary operator, an admissible pairing, an adjoint, and Hodge Laplacians satisfying the harmonic/global-section correspondence. Through the logarithm map
\[
\log:\mathrm{SPD}_n\to\mathrm{Sym}_n,
\]
the SPD cochains are identified with symmetric-matrix-valued cochains of effective dimension
\[
r=\dim\mathrm{Sym}_n=\frac{n(n+1)}{2}.
\]
Under this identification, restriction maps of the form
\[
P\mapsto MPM^\top,\qquad M\in O(n),
\]
become linear isometric transports
\[
S\mapsto MSM^\top.
\]
Thus the SPD sheaf provides a concrete non-Euclidean instance where the finite-dimensional index-heat traceframework applies. We verify the required comparison ingredients as follows:

\begin{itemize}[leftmargin=2em]
    \item \textbf{Baseline-preserving embedding.}
    The Euclidean-to-SPD lifting
    \[
    \iota_\varepsilon(x)=xx^\top+\varepsilon I
    \]
    preserves Euclidean global sections at the harmonic-set level. Hence the Euclidean baseline is embedded into the SPD harmonic space:
    \[
    \iota_{\ast}H^0(\xi)\subseteq H^0(\mathcal F_{\mathrm{SPD}}).
    \]

    \item \textbf{Geometric transport.}
    The congruence-type transports generate nontrivial SPD-compatible parallel transport on \(\mathrm{Sym}_n\) after logarithmic linearization. The corresponding holonomy-fixed sector gives the geometric source of invariant harmonic directions, while nontrivial transport prevents the model from reducing to a full identity-sheaf tensor replication.

    \item \textbf{Index comparision.}
    The SPD Hodge complex satisfies the corrected relative capacity identity
    \[
    \Delta \dim H^0
    =
    \Delta_{\mathrm{ind}}
    +
    \Delta_{\mathrm{Str}}^{(1)}.
    \]
    Therefore the strict enlargement of SPD global sections should not be interpreted as a positive raw Euler-index jump on cyclic graphs. Instead, it is measured by the relative quotient
    \[
    H^0(\mathcal F_{\mathrm{SPD}})/\Phi_*H^0(\xi).
    \]
\end{itemize}

Since the SPD sheaf admits global sections beyond the lifted Euclidean image, the quotient
\[
H^0(\mathcal F_{\mathrm{SPD}})/\iota_{\ast}H^0(\xi)
\]
is nontrivial. This confirms that Theorem~\ref{thm:index_supertrace} detects genuine relative harmonic capacity, stronger than simply observing a larger ambient stalk dimension.

\begin{remark}
The purpose of this discussion is not to claim that the previous SPD sheaf
work already provides a complete anti-oversmoothing theory or experimental
validation. Rather, it shows that the structural ingredients required by the
index-comparison theorem can occur in a concrete non-Euclidean sheaf model.

First, the SPD sheaf provides geometric stalks modeled by
$\mathrm{SPD}_n$, together with restriction maps induced by
congruence-type transports and logarithmic SPD geometry [SPD sheaf,
Definition/Construction of SPD sheaf; Proposition on SPD restriction maps].
Together with an admissible pairing, these data define a coboundary operator
\[
\delta_{\mathcal F_{\mathrm{SPD}}}:
\mathcal C^0(G;\mathcal F_{\mathrm{SPD}})
\to
\mathcal C^1(G;\mathcal F_{\mathrm{SPD}}),
\]
its adjoint $\delta_{\mathcal F_{\mathrm{SPD}}}^{\ast}$, and the associated
Hodge Laplacian
\[
L_{0,\mathrm{SPD}}
=
\delta_{\mathcal F_{\mathrm{SPD}}}^{\ast}
\delta_{\mathcal F_{\mathrm{SPD}}}
\]
By Definition 4.7 in~\cite{our}, hence the SPD sheaf realizes
the Hodge-compatible operator package required by the general framework.

Second, the Euclidean-to-SPD embedding gives the baseline comparison map
\[
\iota:\xi\to\mathcal F_{\mathrm{SPD}}
\]

By the Euclidean-to-SPD embedding defined in Definition 4.10 of~\cite{our}, Euclidean global sections are preserved inside the SPD
sheaf, giving an induced injection
\[
\iota_\ast:
H^0(\xi)\hookrightarrow H^0(\mathcal F_{\mathrm{SPD}})
\]
By Proposition 4.13 in~\cite{our}, the SPD sheaf thus does not merely have a larger ambient stalk; it contains the Euclidean
harmonic baseline as a comparable subspace of global sections.

Third, the strict enlargement theorem for the SPD sheaf shows that
\[
\dim H^0(\mathcal F_{\mathrm{SPD}})
>
\dim H^0(\xi)
\]
By Theorem 4.15 in~\cite{our}-- or equivalently, since the Euclidean baseline
injects into the SPD global-section space, the quotient
\[
H^0(\mathcal F_{\mathrm{SPD}})
/
\iota_\ast H^0(\xi)
\]
is nontrivial. That is, this quotient measures the excess harmonic capacity beyond the Euclidean constant-channel baseline. Moreover, Appendix.B in~\cite{our} gives the index jump, and by calculating the discrete trace of $L_{\text{SPD}}$ one derives the certification $\Delta_{\text{ind}}+\Delta_{\text{ind}}>0$. In practice, it is enough for us to interpret that it holds Theorem~\ref{thm:main}.

This reinterpretation is important for two reasons. First, it shows that the
comparison theorem is not only formal: a concrete geometric sheaf can satisfy
the baseline-preserving injection and strict harmonic enlargement required
by the framework. Second, it separates the present contribution from the
previous SPD-specific construction. The earlier SPD sheaf work established a
strict global-section enlargement result, but it was not formulated as a
complete oversmoothing theory and did not provide anti-oversmoothing
experiments for SPD sheaves. The present paper extracts the general
comparison principle behind that result and later evaluates the
anti-oversmoothing behavior experimentally for SPD-type and nonlinear
variants.

Finally, the pairing and the index comparison play different roles. The
pairing determines the analytic energy
\[
\mathcal E_{\mathcal F}(x)
=
\langle
\delta_{\mathcal F}x,
\delta_{\mathcal F}x
\rangle,
\]
and hence the diffusion trajectory toward the harmonic space. The SPD prototype illustrates the central message of the linear theory:
anti-oversmoothing capacity should be compared relative to the Euclidean
baseline through an excess harmonic quotient, rather than read from the
absolute size of a single kernel.
\end{remark}

\section{Nonlinear Generalization}
\label{app:gyrosheaf}
\subsection{Details of Non-Linear Index Jump~\ref{nonlinearindexjump}}

Prior work assumes that the stalks $\mathcal F(v)$ are vector spaces, in which case the cochain spaces are:
\[
\mathcal C^0(G;\mathcal F)=\bigoplus_{v\in V}\mathcal F(v),
\qquad
\mathcal C^1(G;\mathcal F)=\bigoplus_{e\in E}\mathcal F(e)
\]
are linear spaces, or at least carry a compatible linear structure (e.g., a Lie group structure on SPD manifolds). The coboundary operator $\delta\colon\mathcal{C}^0(G;\mathcal{F})\to\mathcal{C}^1(G;\mathcal{F})$ is linear, and given an admissible inner product, one can define its adjoint $\delta^\ast$ and the $0$-Laplacian $\mathcal{L}_0=\delta^\ast\delta$, which is self-adjoint by construction. This linear structure is essential: it underpins the heat kernel, the McKean--Singer supertrace, and the resulting global index formula.

A fundamentally different setting arises when the stalks are nonlinear manifolds: each node feature is no longer a vector but a point on a manifold. The $0$-cochain space then becomes the product manifold
\[
\mathcal{C}^0 = \prod_{v \in V} M_v,
\]
where each $M_v$ is a (possibly distinct) nonlinear stalk manifold. Correspondingly, the coboundary operator generalizes to a nonlinear map $\delta\colon\mathcal{C}^0 \to \mathcal{C}^1$, with $\mathcal{C}^1$ denoting the space of edge constraints. The global section condition remains $\delta f = \mathbf{0}$, but the linear kernel $\ker\delta$ is now replaced by the \emph{nonlinear harmonic set}
\[
\mathcal{H} := \bigl\{ f \in \mathcal{C}^0 : \delta f = \mathbf{0} \bigr\}.
\]
When $\mathcal{H}$ is a smooth submanifold, it serves as the natural nonlinear analogue of the space of global sections.

In this nonlinear regime, the classical heat kernel machinery breaks down. While the resulting flow can still be interpreted as a nonlinear gradient flow, it is no longer generated by a linear trace-class operator on a fixed cochain vector space. As a result, the trace $\operatorname{Tr}(e^{-t\mathcal{L}_0})$ loses its spectral interpretation, and the McKean--Singer supertrace does not extend directly to this setting.

\textbf{Local tangent laplacian.} The correct replacement is local. Around a compatible state $f \in \mathcal{H}$, the compatibility map admits a differential
\[
d\delta_f \colon T_f\mathcal{C}^0 \longrightarrow T_{\mathbf{0}}\mathcal{C}^1,
\]
which serves as the tangent-space analogue of the coboundary operator. Given admissible inner products on the tangent spaces, we define its adjoint $(d\delta_f)^\ast \colon T_{\mathbf{0}}\mathcal{C}^1 \to T_f\mathcal{C}^0$ and the \emph{local tangent Laplacian}
\[
\mathcal{L}_{0,f} := (d\delta_f)^\ast\, d\delta_f.
\]
By construction, $\mathcal{L}_{0,f}$ is a self-adjoint linear operator on $T_f\mathcal{C}^0$. Although no global linear heat kernel exists in the nonlinear model, the local tangent heat operator $e^{-t\mathcal{L}_{0,f}}$ is well-defined, and its large-time limit projects onto
\[
\ker \mathcal{L}_{0,f} = \ker d\delta_f.
\]
That is, the surviving directions of the nonlinear diffusion are precisely those in the kernel of the linearized constraint.

When $\mathbf{0}$ is a regular value of $\delta$ near $f$, the implicit function theorem yields
\[
T_f\mathcal{H} = \ker d\delta_f, \qquad \dim T_f\mathcal{H} = \dim T_f\mathcal{C}^0 - \operatorname{rank}(d\delta_f).
\]
The nonlinear global-section capacity is therefore determined not by a single global kernel dimension, but by the local dimension of the harmonic submanifold.

\textbf{From global index to local analytic index.}
This also gives the correct nonlinear analogue of the index. The linearised operator $d\delta_f:T_f\mathcal{C}^0\to
T_{\mathbf{0}}\mathcal{C}^1$ induces a local analytic index:
\[
\operatorname{ind}_f(d\delta)
  :=\dim\ker d\delta_f-\dim\operatorname{coker}d\delta_f
  =\dim T_f\mathcal{C}^0-\dim T_{\mathbf{0}}\mathcal{C}^1,
\]
where the second equality holds because the spaces are finite-dimensional.
Hence \[
\dim T_f\mathcal{H}=\operatorname{ind}_f(d\delta)
+\dim\operatorname{coker}d\delta_f;
\]
If $d\delta_f$ is surjective, then $\dim T_f\mathcal{H}=\operatorname{ind}_f(d\delta)$, the local replacement of the global index formula.

\textbf{From global trace formula to local McKean--Singer cancellation.}
Define Laplacian
$\mathcal{L}_{1,f}:=d\delta_f(d\delta_f)^*$ on
$T_{\mathbf{0}}\mathcal{C}^1$. Since the nonzero spectra of $\mathcal{L}_{0,f}$ and
$\mathcal{L}_{1,f}$ coincide,
\[
  \operatorname{Tr}(e^{-t\mathcal{L}_{0,f}})-\operatorname{Tr}(e^{-t\mathcal{L}_{1,f}})
  =\dim\ker d\delta_f-\dim\ker(d\delta_f)^*
  =\operatorname{ind}_f(d\delta).
\]
Thus, the classical McKean–Singer cancellation survives locally after linearisation, even though the original nonlinear model carries no global trace formula.

\textbf{From global holonomy to local tangent holonomy.} When the stalks are manifolds, the differentials of the nonlinear restriction maps around $f\in\mathcal H$ define linear maps between tangent spaces, inducing a tangent sheaf:
\[
T_f\mathcal F(v):=T_{f_v}M_v.
\]
Along a closed cycle $\gamma$, the
composition of these tangent restriction maps gives a tangent holonomy
operator $d\rho_f(\gamma):T_{f_{v_0}}M_{v_0}\to T_{f_{v_0}}M_{v_0}$,
whose local parallel directions are controlled by
\[
  \operatorname{Fix}(d\rho_f)
  =\bigcap_{\gamma\in\pi_1(G,v_0)}\ker\bigl(d\rho_f(\gamma)-I\bigr).
\]
Thus, The nonlinear holonomy acts through the tangent representation and
controls $\ker d\delta_f$.

\subsection{The Calculation of \texorpdfstring{$D\delta$ and $(D\delta)^*$}{Ddelta and (Ddelta)*} of~\ref{differentil}}
\label{calculation}
For an oriented edge \(e=(u,v)\), the \emph{gyro-coboundary} is defined as:
\[
    \delta(X)_e = X_{ve} \ominus_{sym} X_{ue} = S_{ve}^{1/2} (X_{ve} - X_{ue}) S_{ue}^{1/2} 
\in\operatorname{Sym}_n=\mathcal F(e).
\] 

Fixing the $u$-side representative $X_{ue}$, we define the
\emph{$v$-side local gyro-coboundary map}:
\[
\delta_{ve}(\,\cdot\,)
:=
\delta(\,\cdot\,,X_{ue})
=
\,\cdot\, \ominus_{\mathrm{sym}} X_{ue},
\]
whose explicit form is:
\[
\delta_{ve}(X_{ve})
=
X_{ve}\ominus_{\mathrm{sym}}X_{ue}
=
(I-X_{ve}X_{ue})^{-1/2}(X_{ve}-X_{ue})(I-X_{ue}X_{ve})^{-1/2}.
\]
For notational brevity, set
\[
D_e:=X_{ve}-X_{ue},\qquad
L_e:=(I-X_{ve}X_{ue})^{-1/2},\qquad
R_e:=(I-X_{ue}X_{ve})^{-1/2}.
\]
Then
\[
\delta_{ve}(X_{ve}) = L_e D_e R_e .
\]

\paragraph{Tangent map.}
Let $\mathscr{S}_K[H] := D(K^{-1/2})[H]$ denote the Fr\'{e}chet
derivative of the principal matrix inverse square root at $K$ in the
direction $H$.
The tangent map of $\delta_{ve}$ at $X_{ve}$
\[
  d\delta_{ve}
  \;:\;
  T_{X_{ve}}\!\operatorname{Sym}(n)
  \;\longrightarrow\;
  T_{\delta_{ve}(X_{ve})}\!\operatorname{Sym}(n),
\]
is given by
\[
  d\delta_{ve}(P)
  \;=\;
  L_e\,P\,R_e
  \;+\;
  \mathcal{V}_{ve}(P),
  \qquad
  P \in T_{X_{ve}}\!\operatorname{Sym}(n),
\]
where
\[
 \mathcal{V}_{ve}(P)
  \;:=\;
  \mathscr{S}_{I-X_{ve}X_{ue}}\!\bigl[-P\,X_{ue}\bigr]\,D_e\,R_e
  \;+\;
  L_e\,D_e\;\mathscr{S}_{I-X_{ue}X_{ve}}\!\bigl[-X_{ue}\,P\bigr].
  \]
denote the Fréchet derivative of the principal matrix inverse square root.
Then the tangent map of $\delta_{ve}$ at $X_{ve}$ expressed as: 
\[
d\delta_{ve}: T_{X_{ve}} \mathrm{Sym}(n) \to T_{X_{ve} \ominus X_{ue}}\mathrm{Sym}(n)
\]

\[
d\delta_{ve}(P)
=
L_e P R_e
+
\mathcal V_{ve}(P),
\qquad
P\in T_{X_{ve}}\mathrm{Sym}(n),
\]
where
\[
\mathcal V_{ve}(P)
:=
\mathscr S_{\,I-X_{ve}X_{ue}}[-P X_{ue}]\,D_eR_e
+
L_eD_e\,\mathscr S_{\,I-X_{ue}X_{ve}}[-X_{ue}P].
\]

\paragraph{Frobenius adjoint.}
Let $\langle A,B\rangle_F := \operatorname{tr}(A^{\!\top} B)$ denote the
Frobenius inner product. The adjoint of~$d\delta_{ve}$ with respect
to this pairing is
\[
  (d\delta_{ve})^{\ast}(E)
  \;=\;
  \operatorname{sym}\!\Bigl(
    L_e^{\!\top}\,E\,R_e^{\!\top}
    \;+\;
    \mathcal{V}_{ve}^{\ast}(E)
  \Bigr),
  \qquad
  E \in T_{\delta_{ve}(X_{ve})}\!\operatorname{Sym}(n),
\]
where $\operatorname{sym}(Z) := \tfrac{1}{2}(Z + Z^{\!\top})$ and
\[
  \mathcal{V}_{ve}^{\ast}(E)
  \;:=\;
  -\,\mathscr{S}_{I-X_{ve}X_{ue}}^{\ast}\!
    \bigl(E\,R_e^{\!\top}\,D_e^{\!\top}\bigr)\,X_{ue}^{\!\top}
  \;-\;
  X_{ue}^{\!\top}\,
  \mathscr{S}_{I-X_{ue}X_{ve}}^{\ast}\!
    \bigl(D_e^{\!\top}\,L_e^{\!\top}\,E\bigr).
\]
Here $\mathscr{S}_K^{\ast}$ denotes the Frobenius adjoint of the
linear map $\mathscr{S}_K$.

\begin{remark}
While the closed-form expressions for $d\delta_{ve}$ and
$(d\delta_{ve})^{\ast}$ above may appear involved, we stress that they need \emph{not} be
implemented explicitly for gradient computation during training.
Because the gyro-Laplacian $\mathcal L$
satisfies
$(\mathcal L X)_v
:=
\nabla_{X_v}\mathcal E(X)=\bigl(D\delta(X)\bigr)^*D\delta(X)$, differentiating gyro-Laplacian end-to-end with automatic differentiation
yields exactly the same gradient as composing the tangent maps and
their adjoints derived here.
In all our experiments we therefore simply back-propagate through
$\Delta_{\mathrm{gyr}}$ directly, leveraging the \texttt{autograd} machinery of
PyTorch.
The explicit formulate are
presented for theoretical completeness: they make the gradient structure transparent.
\end{remark}

\subsection{Proof of Proposition~\ref{prop:Gyro-Valued Hodge} (Hodge-type Decomposition for the Nonlinear GyroSheaf)} \label{app:D3}
\begin{proof}
This construction is in the same spirit as the classical Hodge decomposition on a compact, oriented Riemannian manifold $M$ without boundary. In the continuous case, one starts from the space $A^k(M)$ of smooth $k$-forms. For any two such forms $\alpha,\beta\in A^k(M)$, the Riemannian metric together with the Hodge star operator $*$ determines an $L^2$-inner product:
\[
\langle \alpha,\beta\rangle_{L^2}
=
\int_M \alpha\wedge *\beta .
\]
Thus, after $L^2$-completion, $A^k(M)$ can be viewed inside a Hilbert space. The exterior derivative
\[
d:A^{k-1}(M)\to A^k(M)
\]
is a linear operator. Its adjoint
\[
d^*:A^k(M)\to A^{k-1}(M)
\]
is defined by
\[
\langle d\alpha,\beta\rangle_{L^2}
=
\langle \alpha,d^*\beta\rangle_{L^2}.
\]
Then the Hilbert space identity
\[
\operatorname{im}(d)
=
\ker(d^*)^\perp
\]
gives the decomposition
\[
A^k(M)
=
\operatorname{im}(d)
\oplus^\perp
\ker(d^*).
\]
Hence the direct-sum decomposition is obtained by combining three ingredients: a Hilbert structure, an inner product, and an adjoint operator.

In our discrete setting the same three ingredients are present. The role of the space of $k$-forms is played by the tangent spaces $T_X \mathcal{C}^0$ and $T_{\delta(X)} \mathcal{C}^1$, both of which are finite-dimensional inner-product spaces and hence, in particular, Hilbert spaces. The linearized boundary map
\[
D\delta_X : T_X \mathcal{C}^0 \longrightarrow T_{\delta(X)} C^1
\]
takes the place of the exterior derivative $d$, and its adjoint
\[
D\delta_X^* : T_{\delta(X)} \mathcal{C}^1 \longrightarrow T_X \mathcal{C}^0
\]
takes the place of the codifferential $d^*$. We now derive the corresponding complement relation. Observe that
\[
Y\in \operatorname{im}(D\delta_X)^\perp
\]
if and only if
\[
\langle D\delta_X h,Y\rangle=0,
\qquad
\forall\, h\in T_X\mathcal{C}^0.
\]
By the definition of the adjoint,
\[
\langle D\delta_X h,Y\rangle
=
\langle h,D\delta_X^*Y\rangle .
\]
Therefore,
\[
\langle D\delta_X h,Y\rangle=0,\quad \forall\, h
\]
if and only if
\[
D\delta_X^*Y=0.
\]
Hence
\[
\operatorname{im}(D\delta_X)^\perp
=
\ker(D\delta_X^*),
\]
which yields the decomposition
\[
T_{\delta(X)}\mathcal{C}^1
=
\operatorname{im}(D\delta_X)
\oplus^\perp
\ker(D\delta_X^*).
\]
Similarly, on the node-level tangent space, we obtain
\[
T_X\mathcal{C}^0
=
\operatorname{im}(D\delta_X^*)
\oplus^\perp
\ker(D\delta_X).
\]
\end{proof}

\section{Experimental Setup}
\label{app:experimental_setup}

This appendix details the implementation, datasets, hyperparameters, and runtime environment for
experiments reported in Section~\ref{sec:Experiments}. Two regimes are evaluated and configured separately: an untrained regime that isolates intrinsic propagation dynamics (Sec.~\ref{app:setup_untrained}), and a trained regime that verifies
these properties survive end-to-end optimization (Sec.~\ref{app:setup_trained}). Model implementations are described in
Sec.~\ref{app:setup_models}, and the compute environment details in Sec.~\ref{app:compute}.

\subsection{Datasets}
\label{app:setup_datasets}

\textbf{Datasets.} We use Cora, Citeseer and Texas since they span both homophilic citation graphs (Cora, Citeseer) and a heterophilic web graph (Texas), so that the layer-wise propagation behavior is evaluated under qualitatively different connectivity regimes. 

\subsection{Model Implementations}
\label{app:setup_models}

\textbf{Linear sheaf diffusion.} 
We use the original NSD discrete sheaf diffusion 
implementation~\cite{bodnar2022nsd} (the 
\texttt{neural-sheaf-diffusion} reference repository) for DiagSheaf, BundleSheaf and GeneralSheaf. IdentitySheaf instantiates the same NSD backbone and Laplacian builder as DiagSheaf, but the restriction maps are hard-coded to the 
identity matrix on every edge. For InverseSheaf, for each undirected edge $e=(i,j)$ we predict an invertible matrix $M_{e}$ from the concatenated
endpoint features through a single linear layer with a nonlinearity, parameterized via an LU factorization $M_{e} = L_{e} U_{e}$ with strict-triangular off-diagonals and softplus-rectified diagonals to ensure $M_{e}$ is invertible
throughout training. Edge transports in the both $i \to j$ and $j \to i$ direction use the same $M_{e}$.

\textbf{SPDSheaf.}
SPDSheaf is a from-scratch PyTorch implementation following the SPD-valued sheaf construction of~\cite{our}, simplified for the oversmoothing 
benchmark: we retain the core SPD geometry and sheaf Laplacian, but omit the molecule-specific dual-stream architecture and the coordinate-to-SPD 
lifting. Each node carries an SPD state 
$P_v \in \mathrm{SPD}_n$. 
% Initial SPD representations are obtained by projecting Euclidean node features through a single linear-plus-LeakyReLU layer to $\mathbb{R}^{d(d+1)/2}$, reshaping the output into a symmetric matrix $S \in \mathrm{Sym}_d$, and lifting via 
% $P_v^{(0)} = S S^\top + \varepsilon I_d$ to ensure positive definiteness.
For each edge $e=(i, j)$, the restriction map is parameterized by feeding the tangent-space vectorizations $[\mathrm{vec}(\log P_i) \,\|\, \mathrm{vec}(\log P_j)]$ through a single-layer local-concat sheaf learner~\cite{bodnar2022nsd}
to get an orthogonal matrix $M_{ie} \in O(n)$. 
% via the Cayley transform $M_{ue} = (I - S_{ue}/2)^{-1}(I + S_{ue}/2)$.
The restriction map acts by orthogonal congruence: $\mathcal{F}_{i \to e}(P) = M_{ie} P M_{ie}^\top$. Each layer applies the SPD sheaf Laplacian $\mathcal{L}_\mathrm{SPD}^{(\ell)}$ 
(Definition 4.7 of~\cite{our}), and then an
% assembled by transporting node states to the edge stalks via the restriction maps, taking tangent-space differences 
% $\log(\mathcal{F}_{v \to e} P_v) - \log(\mathcal{F}_{u \to e} P_u)$ at the 
% identity, and pulling the result back to incident vertices through 
% $M_{\cdot e}^\top (\cdot) M_{\cdot e}$ with appropriate edge-orientation 
% signs. Per-node eigenvalues of the resulting tangent-space update are 
% normalized to $[-1, 1]$ for numerical stability. 
Lie-group update is applied at the identity:
\[
  P_v^{(\ell+1)} 
  \;=\;  P_v^{(\ell)} \odot \mathcal{L}_\mathrm{SPD}^{(\ell)}( P_v^{(\ell)}).
\]
% \textbf{SPDSheaf.} 
% SPDSheaf is implemented from scratch in PyTorch following the 
% construction of~\cite{our}; we do not depend on an external 
% manifold-optimization library. Each node carries an SPD state 
% $P_v \in \mathrm{SPD}_n$. The restriction map for an edge $(u, v)$ is parameterized by passing the local feature pair through a single linear layer, applying a Cayley orthogonalization to produce $Q_{u\to e} \in O_n$, and acting by orthogonal congruence $P_v \mapsto Q_{u\to e}\, P_v\, Q_{u\to e}^\top$. Propagation is a Log--Euclidean tangent update of the form 
% $P \mapsto \mathrm{Exp}\!\big(\mathrm{Log}(P) + \mathcal{L}(P)\big)$, where $\mathrm{Exp}$ and $\mathrm{Log}$ are the matrix exponential 
% and logarithm at the identity, and $\mathcal{L}$ is the sheaf 
% Laplacian assembled from the restriction maps.

\textbf{GyroSheaf (ours).} 
GyroSheaf is also implemented from scratch in PyTorch. Each node carries an SPD state $P_v \in \mathrm{SPD}_n$, which is mapped to the symmetric unit ball $\mathcal{B}$ via the Cayley transform $\Psi(P) = (P - I)(P + I)^{-1}$ before propagation. Restriction maps are parameterized identically to SPDSheaf so that SPDSheaf and GyroSheaf differ only in how information is propagated across edges, isolating the effect of intrinsic curved propagation. The diffusion update is the raw Laplacian increment $X_{k+1} = X_k + \mathcal{L}(X_k)$ based on definition \ref{def:gyro_laplace}.

\subsection{Oversmoothing Measures}
\label{app:setup_measures}

% We adopt the normalized Dirichlet energy and mean-average-distance (MAD) as primary oversmoothing measures, with the Dirichlet energy form and MAD definition following \cite{rusch2023survey} with the $\lVert X \rVert_{F}^{2}$ normalization.

We adopt a Dirichlet energy measure and the mean-average-distance as primary oversmoothing measures. The pairwise summation form of both follows the survey of Rusch et al.\cite{rusch2023survey} (Eqs. (2) and (4) therein). For normalized Dirichlet energy, we additionally divide by $\lVert X \rVert_{F}^{2}$ to make the measure invariant to global feature rescaling, in line with~\cite{zhang2026measuring}.

% We adopt the normalized Dirichlet energy~\cite{cai2020note, zhou2021dirichlet} and mean-average-distance (MAD)~\cite{rusch2023survey} as primary oversmoothing measures. A central practical concern when comparing oversmoothing across heterogeneous propagation operators is that no single energy is operator-natural for all of them: Bison \& Sperduti \cite{deidda2025dirichlet} argue that measure-operator mismatch can produce misleading conclusions, implying that using each model's operator-natural energy. However, when the goal is direct cross-model comparison across qualitatively different operators (normalized GCN-style propagation, mean aggregation, sheaf Laplacians on linear stalks, curved propagation on SPD/gyro stalks), per-model operator-natural energies are no longer commensurable—each model is then measured by a different functional, conflating model differences with measurement differences. We therefore make the opposite trade-off: we adopt a uniform operator-agnostic $\ell_2$-based measure across all models, accepting that it is not natural to any single operator in exchange for cross-model commensurability.

% We use the normalized  Dirichlet energy and mean-average-distance (MAD)~\cite{rusch2023survey} as primary oversmoothing measures, evaluated for each model in its native output geometry with a common summation structure and normalization. 
% A third, 
% auxiliary measure---the normalized stable rank (NumRank)---is reported 
% only as a complementary diagnostic and is described separately in 
% Appendix~\ref{app:numrank}.

\textbf{Where the measure is evaluated.} 
All oversmoothing measures are computed on the \emph{backbone hidden representation immediately after the last propagation layer}, before any classification head. In the untrained regime this is the only representation produced by the model. In the trained regime this means we explicitly do not measure on the classifier logits, so that metrics are insensitive to head-specific design choices and remain directly comparable to the untrained trajectories.

\textbf{Neighborhood.} 
For all measures and all models, the neighborhood used in the 
pairwise summation is the symmetrized 1-hop neighborhood without self-loops,
\begin{equation*}
\mathcal{N}_i \;=\; \{\, j \in V \,:\, (i, j) \in E \,\},
\end{equation*}
applied identically across the Euclidean baselines, the linear 
sheaves, and the SPD-/gyro-valued models. 

% This convention is 
% independent of whether the underlying propagation operator was 
% augmented with self-loops (e.g.\ in GCN's $\hat{A}_{\mathrm{sym}}$); 
% the metric itself never sees a self-loop term.

\textbf{Euclidean and linear-sheaf models.} 
For models whose final-layer representation is a node feature matrix 
$X = \{x_v\}_{v \in V} \subset \mathbb{R}^{d}$ (GCN, GAT, GraphSAGE, DiagSheaf, BundleSheaf, GeneralSheaf, IdentitySheaf, InverseSheaf), both measures use the standard $\ell_{2}$ inner product on $\mathbb{R}^{d}$:
\begin{align*}
\mu_{\mathrm{NormDir}}(X) \;&=\; 
\frac{\tfrac{1}{|V|} \sum_{i \in V} \sum_{j \in \mathcal{N}_{i}} 
       \lVert x_{i} - x_{j} \rVert_{2}^{2}}
     {\sum_{v \in V} \lVert x_{v} \rVert_{2}^{2}}, \\[4pt]
\mu_{\mathrm{MAD}}(X) \;&=\; 
\frac{1}{|V|} \sum_{i \in V} \sum_{j \in \mathcal{N}_{i}}
\left( 1 - \frac{\langle x_{i},\, x_{j} \rangle}
                {\lVert x_{i} \rVert_{2} \, \lVert x_{j} \rVert_{2}} \right).
\end{align*}
% For DiagSheaf, BundleSheaf, GeneralSheaf, IdentitySheaf, and 
% InverseSheaf, the per-node feature vector $x_{v}$ is the full 
% $d \cdot d_{\mathrm{stalk}}$-dimensional vector obtained by 
% flattening the stalk-valued state at node $v$, so the metric is 
% computed on the entire sheaf-valued representation and not on a 
% per-stalk slice.

\textbf{SPD-valued models.} 
For models whose final-layer representation is SPD-valued, 
$P = \{P_{v}\}_{v \in V} \subset \mathrm{SPD}_n$, the natural 
distance is not the ambient $\ell_{2}$ distance but the 
Log-Euclidean distance, which is intrinsic to the SPD manifold. We therefore replace each $P_{v}$ by its image $S_{v} = \log(P_{v}) \in \mathrm{Sym}_n$, the $n \times n$ symmetric matrix, under the matrix logarithm and compute the corresponding measures with the Frobenius inner product on $\mathrm{Sym}_n$:
\begin{align*}
\mu^{\mathrm{SPD}}_{\mathrm{NormDir}}(P) \;&=\; 
\frac{\tfrac{1}{|V|} \sum_{i \in V} \sum_{j \in \mathcal{N}_{i}} 
       \lVert S_{i} - S_{j} \rVert_{F}^{2}}
     {\sum_{v \in V} \lVert S_{v} \rVert_{F}^{2}}, \\[4pt]
\mu^{\mathrm{SPD}}_{\mathrm{MAD}}(P) \;&=\; 
\frac{1}{|V|} \sum_{i \in V} \sum_{j \in \mathcal{N}_{i}}
\left( 1 - \frac{\mathrm{tr}(S_{i} S_{j})}
                {\lVert S_{i} \rVert_{F} \, \lVert S_{j} \rVert_{F}} \right).
\end{align*}
The Log-Euclidean measures admit an equivalent coordinate 
representation under any linear isomorphism $\mathrm{Sym}_n \to \mathbb{R}^{n(n+1)/2}$, but we formulate them at the matrix level to make the underlying Riemannian structure on $\mathrm{SPD}_n$ explicit and to avoid committing to an arbitrary coordinate chart.

\textbf{GyroSheaf.} 
GyroSheaf's final-layer representation lies in the symmetric unit ball $\mathcal{B} \subset \mathrm{Sym}_n$, equipped with a gyrovector-space geometry that is distinct from the Log-Euclidean structure used for SPDSheaf. To enable a controlled comparison between SPDSheaf and GyroSheaf, we deliberately evaluate both under a common metric: we recover an SPD representation via the inverse Cayley transform 
$
\Psi^{-1}(X) = (I + X)(I - X)^{-1},
$
and apply the SPD measures above to $\Psi^{-1}(X) \in \mathrm{SPD}_n$. This is an explicit trade-off: it sacrifices alignment with GyroSheaf's native gyro-geometry in exchange for commensurability with SPDSheaf, so that the SPDSheaf-vs-GyroSheaf comparison in {Appendix~\ref{subsec: gyrovsspd} isolates the effect of the propagation rule rather than confounding it with the choice of measurement geometry.

\textbf{Consistency across geometries.} 
All four measures share identical functional form and normalization; the only difference is the underlying inner product ($\ell_{2}$ on $\mathbb{R}^{d}$ versus Frobenius on $\mathrm{Sym}_n$). Since the feature dimensions are matched ($n(n+1)/2=120=d$) and all measures are scale-invariant, the trajectory shapes are directly comparable across all ten models.

% The summation structure, the normalization factor $1/|V|$, and the functional form of all four measures above are identical across the Euclidean and SPD cases; the only difference between 
% $\mu_{\mathrm{NormDir}}$ and $\mu^{\mathrm{SPD}}_{\mathrm{NormDir}}$ (and similarly for MAD) is the underlying inner product---$\ell_{2}$ on $\mathbb{R}^{d}$ versus Frobenius on $\mathrm{Sym}_n$---which matches to the linear-space representation of each model's output. Absolute numerical values are not directly comparable across the two geometries, since they
% measure pairwise dispersion in geometrically distinct spaces; what the $\lVert X \rVert_{F}^{2}$ normalization affords is scale invariance within each geometry, which renders each model's trajectory shape (monotonicity, decay rate, plateau behavior) invariant to global feature rescaling. It is this trajectory shape, not the absolute
% energy values, that we compare across all 10 models.

\textbf{Aggregation across seeds and depths.} 
For each $(\text{model}, \text{depth})$ configuration, the measures are computed independently for each of the five seeds and aggregated into a per-depth mean and min--max range. In all trajectory plots the solid curves show the per-seed mean and the shaded bands show the per-seed min--max range; we do not report standard deviations because at $5$ seeds the min--max range is a more honest summary of the variation actually observed.

\subsection{Untrained Regime}
\label{app:setup_untrained}

Following Rusch et al.~\cite{rusch2023survey}, both node features and model parameters are randomly initialized, and we record layer-wise representations under purely forward propagation; no optimizer, loss, or training step is involved. Depths $L \in \{1, 2, 4, 8, 16, 32, 64, 128\}$ are evaluated over five seeds $\{42, 43, 44, 45, 46\}$; trajectory plots report the mean and min--max range across seeds. All Euclidean and linear-sheaf models use a hidden width of $120$. For 
SPDSheaf and GyroSheaf, the SPD stalk dimension is $n = 15$, giving $n(n{+}1)/2 = 120$ independent parameters per node, matched to the Euclidean width. For all linear sheaf models from~\cite{bodnar2022nsd}, the auxiliary weights 
are disabled. These are feature-space transformations that do not enter the sheaf definition and disabling them isolates the intrinsic sheaf-geometric propagation characterized by Theorem~\ref{thm:main}. SPDSheaf and GyroSheaf do not have these auxiliary weights by construction.

% \textbf{Hidden dimension matching.} 
% All Euclidean and linear-sheaf models use a hidden width of $120$. For 
% SPDSheaf and GyroSheaf, the SPD stalk dimension is $n = 15$, giving 
% $n(n{+}1)/2 = 120$ independent parameters per node, matched to the 
% Euclidean width.

% \textbf{Initialization.} 
% The custom Euclidean baselines (GCN, GAT, GraphSAGE) and the input 
% projections of SPDSheaf/GyroSheaf use Xavier-uniform initialization with 
% the LeakyReLU gain. The local-concatenation sheaf learners (used by 
% SPDSheaf, GyroSheaf, IdentitySheaf, InverseSheaf) and the linear layers 
% inside the original NSD sheaf learners follow PyTorch's default 
% \texttt{nn.Linear} initialization. Following Bodnar et 
% al.~\cite{bodnar2022nsd}, the NSD backbone supports orthogonal 
% initialization on right weights and identity initialization on left 
% weights; in the figures reported in this paper these auxiliary weights 
% are disabled (\texttt{--no-left-weights --no-right-weights}), so this 
% choice does not affect the reported trajectories.

\subsection{Trained Regime}
\label{app:setup_trained}
We train every model end-to-end on Cora for $L \in \{2, 4, 8, 16, 32\}$, with five seeds $\{42, 43, 44, 45, 46\}$ per configuration. All oversmoothing curves are computed on the trained backbone hidden representations (not on 
the classifier logits) so that they are directly comparable to the untrained trajectories of App.~\ref{app:setup_untrained}. To isolate the effect of architecture on oversmoothing, we use a single shared training recipe across all model families, summarized in Table~\ref{tab:training_hparams}; all models are trained using cross-entropy loss with the label smoothing value listed therein, and early stopping is triggered when validation accuracy fails to improve for \texttt{Patience} consecutive epochs.

% To verify that the oversmoothing trends observed on Cora  generalize across graph topologies, we replicate the trained-regime study on Citeseer and Texas. Training protocol, seeds, depths, and hyperparameters follow Table~\ref{tab:training_hparams} unchanged; only the dataset is varied. Results are shown in appendix XX.

\begin{table}[h]
\centering
\small
\caption{Training hyperparameters (shared across all model families).}
\label{tab:training_hparams}
\begin{tabular}{lc}
\toprule
Hyperparameter & Value \\
\midrule
Learning rate                       & 0.004 \\
Weight decay                        & $10^{-4}$ \\
Epochs                              & 220 \\
Patience                            & 50 \\
Label smoothing                     & 0.05 \\
LR scheduler                        & ReduceLROnPlateau \\
%Dropout                             & 0.0 \\
Stalk dim. $d$ (sheaf models only)  & 8 \\
\bottomrule
\end{tabular}
\end{table}

\subsection{Compute Environment}\label{app:compute}
All experiments were conducted on a single NVIDIA GeForce RTX 3090 (24GB) GPU with Intel Xeon Gold 5218R CPUs and 376GB system memory. The total compute for all reported results, covering ten models, three datasets, five seeds, and eight depth configurations in both untrained and trained regimes, is modest and reproducible on a single consumer-grade GPU.

\section{Additional Experiments}
\label{app:additional_experiment}
\subsection{Within-Tier Discrimination: SPDSheaf vs. GyroSheaf under Nonlinear Distortion}
\label{subsec:gyrovsspd}\label{subsec: gyrovsspd}

\textbf{Motivation.}
Sections~\ref{sec:untrained} and~\ref{sec:trained} verify the \emph{between-tier} ordering predicted by Theorem~\ref{thm:main}. Within Tier~3, however, SPDSheaf and GyroSheaf share the same stalk dimension and restriction maps, therefore the same index jump $\Delta_\mathrm{ind}$ and holonomy fixed subspace $W$, so the index criterion alone cannot separate them. What actually distinguishes the two models is \emph{how each realizes the tangent complex} of Section~\ref{nonlinearindexjump}, and the goal of this experiment is to expose the regime in which this difference becomes observable.

\textbf{Two linearization strategies.}
SPDSheaf exploits the associative Lie-group structure of $\mathrm{SPD}_n$ to linearize the cochain complex through a single \emph{global chart}, namely the matrix logarithm (cf.\ Appendix~\ref{app:spd_verification}). The resulting linear Hodge complex falls within the finite-dimensional index--heat trace framework of Section~\ref{sec:main}. This shortcut is faithful when the observation map respects the global chart, but its fidelity degrades once the data departs from the globally-linearizable regime. GyroSheaf, by contrast, has no associative group operation on its gyrovector-space stalks (Section~\ref{gyrosheaf}); it therefore follows the \emph{pointwise} linearization program of Section~\ref{nonlinearindexjump}, building the tangent coboundary $d\delta_X$ and the tangent Hodge Laplacian afresh at each harmonic state $X$. This state-dependent construction makes no global-chart assumption and is expected to remain faithful when the observation manifold is nonlinearly distorted.

\textbf{Setup.}
Latent coordinates $\{z_v\} \subset \mathbb{R}^2$ are drawn from a 3-arm pinwheel distribution, and node labels are assigned by the angular coordinate of $z_v$ so that supervision is not linearly separable. The graph is the symmetrized 8-nearest-neighbor graph in latent space, held fixed independently of the observation map. Observed features $x_v \in \mathbb{R}^{16}$ are produced by a fixed nonlinear basis expansion of $z_v$ followed by a random linear projection. Splits are stratified by label with 20 train and 20 validation nodes per class.

To progressively distort the observation manifold, we inject isotropic Gaussian noise:
\[
  \tilde{x} \;=\; \frac{(x + \eta\,\varepsilon) - \mu}{\sigma},
  \qquad \varepsilon \sim \mathcal{N}(0, I),\quad \eta \in [0,\,0.8],
\]
where $(\mu, \sigma)$ are per-dimension statistics of the perturbed features. Re-standardization keeps scales comparable across noise levels. Latent coordinates, labels, and graph are held fixed; only $\eta$ varies.

\textbf{Quantifying global-chart breakdown.}
To compare the two strategies meaningfully, we need a model-agnostic measure of how far the observation map departs from local affinity, i.e.\ how stressed a fixed global chart should be. For each node $i$, let $\mathcal{N}_i$ denote its 8 latent-space nearest neighbors; fit a local affine map $(A_i, b_i)$ by least squares and define the normalized residual
\[
  \mathrm{AffErr}(X) \;=\; \frac{1}{|V|} \sum_{i \in V}
  \frac{\tfrac{1}{|\mathcal{N}_i|} \sum_{j \in \mathcal{N}_i}
  \|x_j - (A_i z_j + b_i)\|_2^2}{\mathrm{Var}(X)}.
\]
$\mathrm{AffErr}=0$ when the observation map is globally affine; larger values mark the breakdown of local-affine approximability. This is precisely the regime in which the fixed global-chart linearization (SPDSheaf) is expected to lose fidelity while the state-dependent tangent-complex construction (GyroSheaf) should remain stable.

\begin{figure}
    \centering
    \includegraphics[width=0.9\linewidth]{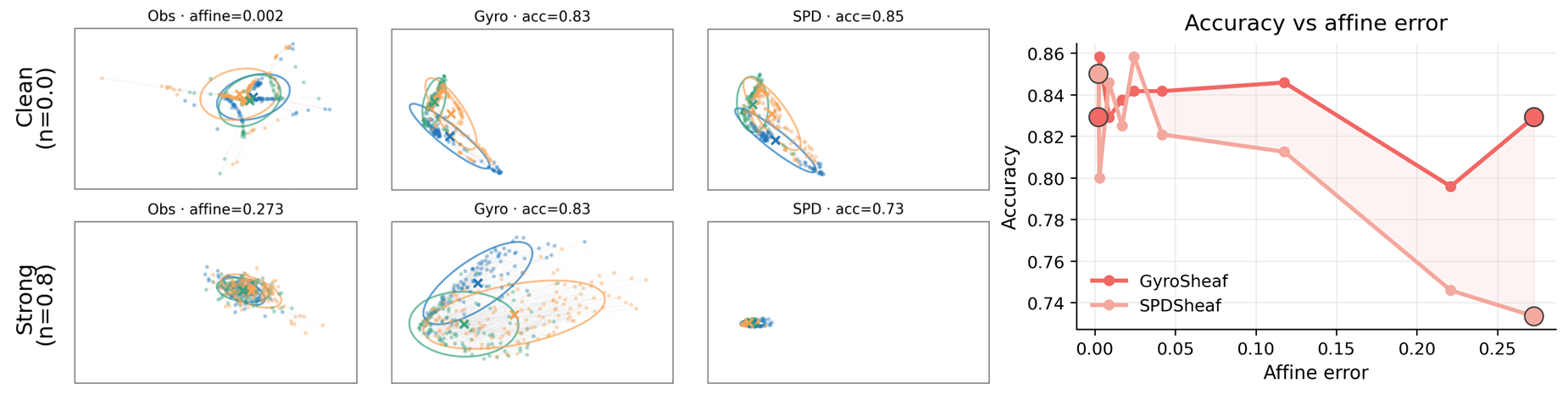}
    \caption{PCA projections of input features (left), GyroSheaf (middle), and SPDSheaf (right) at the clean ($\eta=0$, top) and strongly-perturbed ($\eta=0.8$, bottom) endpoints; ellipses show per-class covariance. Under strong perturbation, SPDSheaf collapses into a narrow band while GyroSheaf retains separable class structure. \emph{Right:}
  test accuracy versus $\mathrm{AffErr}(X)$. SPDSheaf degrades monotonically
  as $\mathrm{AffErr}$ grows; GyroSheaf remains flat.
  }
    \label{fig:gyrospd}
\end{figure}

\textbf{Results.}
Figure~\ref{fig:gyrospd} reports test accuracy against $\mathrm{AffErr}$. At $\mathrm{AffErr}=0.002$, where local linearity essentially holds, the two models result similar: when the global chart is faithful, both linearization strategies succeed. As $\mathrm{AffErr}$ grows to $0.273$, SPDSheaf trends downward to $0.73$, while GyroSheaf remains essentially flat at $0.83$. The two models become distinguishable here: GyroSheaf's pointwise tangent-complex construction tolerates nonlinear distortion that defeats SPDSheaf's global-chart linearization. In short, the index criterion governs harmonic capacity, while the linearization strategy provides a complementary axis of robustness once the data leaves the globally-linearizable regime.

\end{document}